\documentclass[accepted]{uai2024}

\usepackage[american]{babel}

\usepackage{natbib} 
\usepackage{mathtools} \usepackage{booktabs} \usepackage{tikz} \usepackage{minitoc}

\usepackage{mathtools} \usepackage{amsthm, amssymb, amscd, amsfonts} \usepackage{thmtools}
\usepackage{thm-restate} \usepackage{bm} \usepackage{xspace}
\usepackage{float}
\usepackage{algorithm}
\usepackage{algpseudocode}

\usepackage{amsmath,amsfonts,bm,dsfont,diagbox,amsthm,cleveref}
\usepackage{thm-restate}

\algnewcommand{\LineComment}[1]{\State \(\triangleright\) #1}
\newcommand{\strain}{\text{train}}
\newcommand{\sheld}{\text{held}}
\newcommand{\sgen}{\text{gen}}

\theoremstyle{plain}

\theoremstyle{definition}

\theoremstyle{remark}

\newcommand{\eat}[1]{}

\makeatletter
\let\save@mathaccent\mathaccent
\newcommand*\if@single[3]{\setbox0\hbox{${\mathaccent"0362{#1}}^H$}\setbox2\hbox{${\mathaccent"0362{\kern0pt#1}}^H$}\ifdim\ht0=\ht2 #3\else #2\fi
    }
\newcommand*\rel@kern[1]{\kern#1\dimexpr\macc@kerna}
\newcommand*\widebar[1]{{\@ifnextchar^{{\wide@bar{#1}{0}}}{\wide@bar{#1}{1}}}}
\newcommand*\wide@bar[2]{\if@single{#1}{\wide@bar@{#1}{#2}{1}}{\wide@bar@{#1}{#2}{2}}}
\newcommand*\wide@bar@[3]{\begingroup
\def\mathaccent##1##2{\let\mathaccent\save@mathaccent
\if#32 \let\macc@nucleus\first@char \fi
\setbox\z@\hbox{$\macc@style{\macc@nucleus}_{}$}\setbox\tw@\hbox{$\macc@style{\macc@nucleus}{}_{}$}\dimen@\wd\tw@
    \advance\dimen@-\wd\z@
\divide\dimen@ 3
    \@tempdima\wd\tw@
    \advance\@tempdima-\scriptspace
\divide\@tempdima 10
    \advance\dimen@-\@tempdima
\ifdim\dimen@>\z@ \dimen@0pt\fi
\rel@kern{0.6}\kern-\dimen@
\if#31
        \overline{\rel@kern{-0.6}\kern\dimen@\macc@nucleus\rel@kern{0.4}\kern\dimen@}\advance\dimen@0.4\dimexpr\macc@kerna
\let\final@kern#2\ifdim\dimen@<\z@ \let\final@kern1\fi
        \if\final@kern1 \kern-\dimen@\fi
    \else
        \overline{\rel@kern{-0.6}\kern\dimen@#1}\fi
}\macc@depth\@ne
\let\math@bgroup\@empty \let\math@egroup\macc@set@skewchar
\mathsurround\z@ \frozen@everymath{\mathgroup\macc@group\relax}\macc@set@skewchar\relax
\let\mathaccentV\macc@nested@a
\if#31
    \macc@nested@a\relax111{#1}\else
\def\gobble@till@marker##1\endmarker{}\futurelet\first@char\gobble@till@marker#1\endmarker
    \ifcat\noexpand\first@char A\else
        \def\first@char{}\fi
    \macc@nested@a\relax111{\first@char}\fi
\endgroup
}
\makeatother

\def\eqref#1{equation~\ref{#1}}

\def\1{\bm{1}}

\DeclareMathAlphabet{\mathsfit}{\encodingdefault}{\sfdefault}{m}{sl}
\SetMathAlphabet{\mathsfit}{bold}{\encodingdefault}{\sfdefault}{bx}{n}

\def\cG{{\mathcal{G}}}

\def\sG{{\mathbb{G}}}

\def\sR{{\mathbb{R}}}

\newcommand{\E}{\mathbb{E}}

\newcommand{\R}{\mathbb{R}}

\DeclareMathOperator*{\argmin}{arg\,min}

\def\E{\mathrm{E}}

\def\L2L{L^2(\mathcal{J}) \rightarrow L^2(\mathcal{J})}

\renewcommand{\E}{\mathbb{E}}
\renewcommand{\L}{\mathcal{L}}

\newcommand{\sharpness}{\psi}

\newcommand{\fullname}{Vertical Validation\xspace}
\newcommand{\name}{VV\xspace}

\newcommand{\Xtrain}{\cG_\strain}
\newcommand{\Xtest}{\cG_\sheld}
\newcommand{\Xgen}{\cG_\sgen}
\newcommand{\Xheldw}{\cG_{\sheld,\mathbf{1}}^{(\ell,j)}}
\newcommand{\Gbar}{\bar{G}}

\newcommand{\phideg}{\phi_{ks}^{\textnormal{D}}}
\newcommand{\phiclus}{\phi_{ks}^{\textnormal{C}}}
\newcommand{\phitriads}{\phi_{ks}^{\textnormal{T}}}
\newcommand{\phishort}{\phi_{ks}^{\textnormal{S}}}
\newcommand{\phiclique}{\phi_{ks}^{\textnormal{M}}}

\newcommand{\phimwt}{\phi_{ks}^{\textnormal{Mlwt}}}
\newcommand{\phitpsa}{\phi_{ks}^{\textnormal{TPSA}}}
\newcommand{\phirc}{\phi_{ks}^{\textnormal{RC}}}
\newcommand{\philogp}{\phi_{ks}^{\textnormal{LogP}}}

\newcommand{\phiavg}{\phi_{ks}^{\textnormal{Avg}}}

\newcommand{\pbetamix}{p_\mathrm{BetaMix}}
\newcommand{\qGbar}{q(\Gbar^{(\ell,j)}|\theta = \theta^*_{\Omega(\Xtrain^{(\ell,j)})})}

\newcommand*{\ldblbrace}{\{\mskip-6mu\{}
\newcommand*{\rdblbrace}{\}\mskip-6mu\}}

 \usepackage{soul}
\usepackage{subcaption}
\usepackage{multirow}

\usepackage{cleveref}  

\begin{document}

\author[1]{Mai Elkady}
\author[1]{Thu Bui}
\author[1]{Bruno Ribeiro}
\author[2]{David I. Inouye}

\affil[1]{Computer Science Dept.\\
    Purdue University\\
    West Lafayette, Indiana, USA
}
\affil[2]{Electrical and Computer Engineering Dept.\\
    Purdue University\\
    West Lafayette, Indiana, USA
}

\title{\fullname: Evaluating Implicit Generative Models \\ for Graphs on Thin Support Regions}

\maketitle

\begin{abstract}
  There has been a growing excitement that implicit graph generative models could be used to design or discover new molecules for medicine or material design.
  Because these molecules have not been discovered, they naturally lie in unexplored or scarcely supported regions of the distribution of known molecules.
  However, prior evaluation methods for implicit graph generative models have focused on validating statistics computed from the thick support (e.g., mean and variance of a graph property).
  Therefore, there is a mismatch between the goal of generating novel graphs and the evaluation methods.
  To address this evaluation gap, we design a novel evaluation method called \fullname (\name) that systematically creates thin support regions during the train-test splitting procedure and then reweights generated samples so that they can be compared to the held-out test data.
  This procedure can be seen as a generalization of the standard train-test procedure except that the splits are dependent on sample features.
  We demonstrate that our method can be used to perform model selection if performance on thin support regions is the desired goal.
  As a side benefit, we also show that our approach can better detect overfitting as exemplified by memorization.
\end{abstract}

\section{Introduction}
Over the past decade, significant progress has been achieved in enhancing implicit generative models (GANs \citep{goodfellow2014generative}, VAEs \citep{kingma2022autoencoding}, and diffusion models \citep{sohldickstein2015deep, ho2020denoising}), leading to their extensive use in diverse domains like image and graph generation. In the image generation domain, efforts have been made to standardize evaluation metrics \citep{Wang2004ImageQA, Zhang2018TheUE, NIPS2017_8a1d6947} for comparing the effectiveness of different implicit generative models. However, the graph generation domain has yet to adopt a similar standardization. Moreover, while visual inspection of an image can reveal much about its semantic characteristics, this cannot be applied to graphs.
Perhaps, more importantly, the application of graph generative models in different areas is quite different than image generators.
Instead of aiming to generate an image that looks like others, most graph generative models are designed in hopes that they will be able to generate novel yet interesting graphs, e.g., new molecules with specific properties.

While extrapolating far from the known distribution of graphs is indeed challenging, there's potential for generative models to explore novel graphs within underexplored regions of the graph space by leveraging patterns observed in existing graphs. We illustrate this concept and our proposed evaluation methodology in \autoref{fig:shifted-split-illustration}, using molecules as an example. Thick support regions represent known molecules, while thin support regions denote the space of novel graphs. We note that unlike this toy 2D illustration, real graph distributions (like image distributions) are expected to have many areas of thin support in high dimensions though they may be difficult to identify or characterize.
Thus, the question arises: \emph{How can we measure a graph generative model's ability to generate novel graphs on thin support regions?}
\begin{figure}[!ht]
    \centering
    \includegraphics[width=0.9\columnwidth]{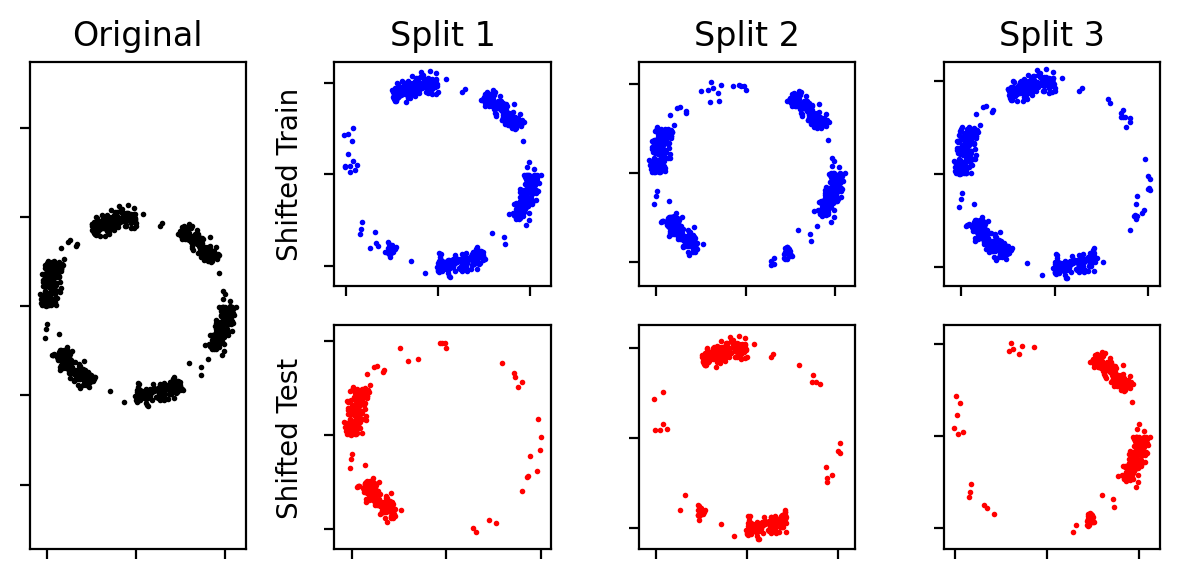}
    \caption{
    \name systematically thins the distribution in a certain region for training (top row) and then evaluates whether the generated samples in the thinned region after reweighting matches the complementary held-out test dataset (bottom row).
    In contrast, standard evaluations will seek to match the macro properties (e.g., mean) of this distribution which emphasizes the regions of thick support.
    The original data (left) illustrates both thick support regions (i.e., areas with many samples) and thin support regions (i.e., areas with very few samples).
    }
    \label{fig:shifted-split-illustration}
\end{figure}

The most intuitive and potentially ideal evaluation approach would involve computing the negative log-likelihood on a test dataset. This metric, relying on the KL divergence, is inherently sensitive to thin support regions. However, for modern implicit generative models, log-likelihood is difficult to compute exactly or even approximate well.

Given these challenges, most recent evaluations of generative models seek to compare statistics between generated samples and a held-out test set. 
A simple approach is to merely compare the means of these distributions or the means of various graph properties.
Extending the difference in means to the worst case difference between the expectation of a function is known as Maximum Mean Discrepancy (MMD).
The current and most commonly used standard procedure for evaluating graph generative models is to compute the MMD for the degree, clustering coefficient and orbit count distributions between the generated samples and a held-out set \citep{Niu2020PermutationIG, Chen2021OrderMP, Liao2019EfficientGG, pmlr-v162-hoogeboom22a, Vignac2022DiGressDD}. 
However, these mean-based approaches focus on the regions of thick support where the most mass is.
Thus, they can fail to detect a generative models' performance on the thin support regions---the exact regions where novel graphs could exist. 
We illustrate this problem in more detail \autoref{sec:appA}.

To address this evaluation gap, we focus on matching the statistics (e.g., KS \citep{KolmogorovSmirnov1933SullaDE}) of systematically constructed thin regions of support as illustrated in \autoref{fig:shifted-split-illustration}.
Inspired by the classic train-test split idea, we develop a novel method to ``vertically'' split the graph dataset into train and test datasets depending on one graph property.
Then, after training, we reweight generated samples and compare them to the corresponding held-out test dataset.
At a high level, our evaluation approach, called Vertical Validation (VV), artificially simulates a thin region, but then has ground truth samples from this thinned region to compare against.
After reweighting, any metric that can handle weights could be used to compare the generated samples to the held-out samples.
We choose the average KS statistic along graph property distributions though other metrics could be used within our framework.
This procedure enables the evaluation of the generation capabilities in localized thin support regions rather than focusing on the thick support regions.
We summarize our contributions as follows:
\begin{enumerate}
    \item We develop a novel ``vertical'' train-test splitting approach that systematically creates thin support in the training data while the testing data has thick support in this region. This can be applied to arbitrary 1D distributions and includes two hyperparameters that control the split sharpness and thickness of full support.
    \item We combine this split procedure with a reweighting step to form a novel methodology for evaluating the ability to generate data in thin support regions. We prove that this metric instantiated with the KS statistic is consistent.
    \item We empirically validate our \name approach for model selection in the thin support regime of synthetic datasets and then apply \name to compare representative graph generative models on two popular graph datasets.
    
\end{enumerate}

\section{Background and Related Work}
\label{sec:related_work}

\paragraph{Evaluating Graph Generative Models}
Several methods have been used for evaluating the performance of graph generative models. 
Some methods can be used for all graph types. These methods include novelty, uniqueness, Wasseretian distance between generated samples and a held-out set, Maximum Mean Dependency (MMD) for the degree, clustering coefficient and orbit count distributions between the generated samples and a held-out set as used by \citet{LiaoGRANs, martinkus2022spectre, Vignac2022DiGressDD, pmlr-v162-hoogeboom22a}. 

On the other hand, some of the metrics  are  specific for molecular generation tasks, such as the Frechet ChemNet Distance(FCD) introduced by \citet{Preuer2018FrchetCD}, or the Neighbourhood subgraph pairwise distance kernel (NSPDK) MMD introduced by \citet{Costa2010FastNS}. Other metrics include the  percentage of atom stability, molecule stability, validity of generated molecules as used by \citet{pmlr-v162-hoogeboom22a, Vignac2022DiGressDD} and others.

 As one critique of prior evaluations, \citet{o'bray2022evaluation} noticed that metrics based on MMD were sensitive to the choice of the kernel functions, the parameters of kernel, and the parameters of the descriptor function. \citet{thompson2022evaluation} also noted that current evaluation methods do not accurately capture the diversity of the generated samples,
 which lead them to propose their own approach based on using the graph embedding produced by GIN \citep{Xu2018HowPA} and calculating metrics on that embedding to better capture diversity.
 
\cite{southern2023curvature} recently proposed the use of curvature descriptors and  topological data analysis for a more robust and expressive metric for evaluating graph generative models but does not specifically consider thin support regions.
 Despite this progress, there are still deficiencies in the current metrics particularly, when it comes to measuring the ability of the model to generate data in thin support regions.
\paragraph{Related Train-Test Validation Methods}

While classic cross validation methods sample form i.i.d. splits \citep{Arlot2009ASO}, our approach creates splits that are nearly out-of-distribution, which means that there is a distribution shift between train and test.
Evaluating models under distribution shift has been studied for supervised learning under the names of domain adaptation (DA) \citep{Farahani2020ABR} and domain generalization (DG) \citep{Koh2020WILDSAB}. In both cases, the accuracy metric is evaluated on a test distribution that is different from the training distribution. However, both DA and DG primarily consider supervised learning tasks while we consider generative models. Thus, our approach can be viewed as a type of distribution shift evaluation for generative models.

In a similar vein, \cite{bazhenov2023evaluating} propose a method for splitting the \emph{nodes} of a graph into in-distribution and out-of-distribution nodes based on structural properties.
This splitting enables the evaluation of node-level prediction tasks under distribution shifts.
We differ from this work because we split along graphs instead of along nodes, and ours is aimed at evaluating generative models while theirs is focused on node-level tasks.

\section{Measuring Generalization for Implicit Graph Generative Models}

Our proposed vertical validation method (VV) can be viewed as a generalized version of cross validation with two main steps. First, we propose a new way to generate biased train-test splits \footnote{We use the term ``split'' here instead of ``fold'' because ``fold'' may seem like uniform splitting.} that are dependent on a chosen graph property (e.g., average node degree), which we call the \emph{split property}. 
In particular, our train-test splits thin some regions of the support along the split property. Second, we propose a meta evaluation metric that reweights the generated samples to unbias them and then compares them to the held-out samples using a two-sample metric on other graph properties. This second step is needed because the training sample distribution is different than the held-out test distribution based on our biased train-test splitting. The biased split and reweighting is carefully controlled using only the 1D distributions of the split property.

Under these circumstances, if the model can still produce ``good" samples in a region that had thin support, we know that the model can generalize well. Conversely, if the model does not capture the underlying smoothness of the true distribution, it may struggle to generate realistic samples in regions with reduced training data support (underfitting) or it will memorize such data (overfitting). 

\paragraph{Notation}
To describe our method, we begin by introducing some notations that we will use for the rest of the paper. Let $p(G)$ denote the true graph distribution of graphs and let $G \sim p(G)$ be a random variable representing a graph, which includes the graph itself and any node or edge attributes if available. Let $m$ be the number user-specified graph properties of interest (e.g., average node degree). These properties are defined by deterministic functions of the graph denoted by $h_\ell: \sG \to \sR$, where $\sG$ is the space of all valid graphs and $\ell \in \{1, \cdots , m\}$. 

Let ${\bf{h}}(G): \sG \rightarrow \sR^m$ be the vector function that maps $G$ to its $m$ graph property values, i.e., ${\mathbf{h}}(G) = (h_1(G), h_2(G), \cdots, h_m(G))$.
Furthermore, let $\mathbf{Z} = {\bf{h}}(G)$,
where the distribution of $Z$ is the pushforward of the graph distribution under $\bf{h}$. Let the true marginal CDFs of each dimension of $\mathbf{Z}$ be denoted by $F_{Z_\ell}(Z_\ell)$. 
We now define the random vector $\mathbf{U} \in [0,1]^m$, where each element is the corresponding CDF value of $Z_\ell$, i.e., $U_\ell = F_{Z_{\ell}}(Z_\ell) = F_{h_{\ell}(G)}(h_\ell(G)), \forall \ell \in \{1,2,\cdots,m\}$.
\setlength{\belowdisplayskip}{0.5pt} \setlength{\belowdisplayshortskip}{0.5pt}
\setlength{\abovedisplayskip}{0.5pt} \setlength{\abovedisplayshortskip}{0.5pt}

\vspace{-0.5em}
\paragraph{Train-Test Split Notation}
Let $(G_i)_{i=1}^n$ be our given dataset which are i.i.d. samples from $p(G)$ and where each $G_i$ is a random variable.
For $k$-fold cross validation, we introduce $m$ split variables corresponding to each graph property, denoted as

$S_{i,1}, S_{i,2}, \cdots, S_{i,m} \in \{1,2, \cdots,k\}$, that indicate the test split for the $i$-th graph using the $\ell$-th split property. 
 
Let $S_{i,\ell} \sim p(S|G_i,  h_\ell)$, where  $p(S|G_i,  h_\ell)$ denotes the splitting distribution which can depend on the graph $G_i$ and the $\ell$-th graph property. 

Moving forward, $i$ will be used for the index of the graphs in the sequence $(G_i)_{i=1}^n$, i.e. $i \in \{1, 2, \cdots n\}$, and $j$ is used for the index of the splits, i.e. $j \in \{1, \cdots , k\}$. 
Given these split variables, the held-out dataset from $(G_i)_{i=1}^n$ of the $j$-th split and $\ell$-th split property will be denoted as $\Xtest^{(\ell,j)} = \ldblbrace G_i | \forall i, S_{i,\ell} = j\rdblbrace$, where double curly braces $\ldblbrace \rdblbrace$ denotes a multi-set indicating that any element can have a multiplicity more than $1$. The corresponding training dataset will be denoted $\Xtrain^{(\ell,j)} =  \ldblbrace G_i | \forall i, S_{i,\ell} \neq j\rdblbrace$. Finally, let $\{\Gbar_i^{(\ell,j)}\}_{i=1}^{n_{\ell,j}}$ be $n_{\ell,j}$ i.i.d. samples from $\qGbar$, which denotes the generated graph distribution using a training algorithm $\Omega$ that only has access to $\Xtrain^{(\ell,j)}$ and let the generated dataset be denoted by $\Xgen^{(\ell,j)} = \ldblbrace\Gbar_i^{(\ell,j)} | \forall i\rdblbrace$.

\subsection{Step 1: Shifted Splitting } \label{sec:beta-split}

In the first part of our framework, we need to define the distribution $p(S|G_i,h_\ell)$ to create $k$ biased splits for a given graph $G_i$ and $\ell$-th graph property. By biased splits we mean that the split variable depends on the graph, i.e., $P(S_{i,\ell} | G_i) \neq P(S_{i,\ell})$.
For simplicity, we will assume that the conditional distribution is only dependent on the value of the $\ell$-th graph property, i.e., $p(S|G_i,h_\ell) = p(S|Z_{i,\ell})$.
This will enable us to focus on a 1D distribution for splitting and reweighting.
Many distributions of $P(S_{i,\ell}|G_i)$ could give biased splits, but we wanted both a generic and balanced splitting method, and hence we incorporate the two constraints mentioned below.

\emph{For the first constraint}, we want our method to work generically for any arbitrary distribution of $Z_{i,\ell}$.
Thus, instead of using $Z_{i,\ell}$ directly, we only consider the CDF value of $Z_{i,\ell}$, i.e., we assume $p(S_{i,\ell} | G_i, h_\ell)=p(S_{i,\ell} | U_{i,\ell})$, where $U_{i,\ell} = F_{Z_{i,\ell}}(Z_{i,\ell}) = F_{h_{\ell}(G_i)}(h_{\ell}(G_i))$.
This means that the splitting only depends on the rank of the $\ell$-th graph property rather than a specific value and thus it can be generically applied to any graph property.
Essentially this constraint acts to restrict the space of distributions to those that only depend on $U_{i,\ell}$ making it applicable to any property distribution.

\emph{For the second constraint}, 
 we want the splits to have equal sizes in expectation to ensure that the splits are balanced. To achieve such effect, the marginals of $S$ must be uniform, i.e., we must ensure that $p(S_{i,\ell}) = 1/k$. 
 To satisfy this last constraint, we notice that we can decompose the conditional distribution via Bayes rule $p(S_{i,\ell} | U_{i,\ell}) = \frac{p(S_{i,\ell})p(U_{i,\ell} | S_{i,\ell})}{p(U_{i,\ell})}$, where $p(S_{i,\ell}) = 1/k$ to ensure equal splits and $p(U_{i,\ell})$ is the uniform distribution by the fact that $U_{i,\ell}$ is based on the CDF of $Z_{i,\ell}$.
 Thus, we can choose any distribution for $p(U_{i,\ell} | S_{i,\ell})$ that satisfies the constraint that the marginal is uniform, i.e., $\sum_j p(S_{i,\ell}=j)p(U_{i,\ell} | S_{i,\ell})$ is uniform. In other words this constraint can be simplified to finding a component distribution whose mixture is a uniform distribution and whose weights are equal to $p(S_{i,\ell})$. One such choice of $p(U_{i,\ell} | S_{i,\ell})$ could be disjoint uniform splits corresponding to the quantiles of $U_{i,\ell}$, i.e., $p(U_{i,\ell} | S_{i,\ell} = j) = p_{\mathrm{Unif}[\frac{j-1}{k}, \frac{j}{k}]}(U_{i,\ell})$, where $p_{\mathrm{Unif}[a,b]}(U) = \frac{1}{b-a}$ denotes a $\mathrm{Uniform}$ distribution between the interval $[a,b]$.

 However, there are two issues with quantile splits: (1) sharp cutoff for splits would create unnatural sharp edges in the training and test distributions, and (2) this would mean zero support on parts of the distribution, which would mean generative models would have to extrapolate beyond their training data---something that cannot be easily done with current methods (see \autoref{fig:quantile_like} for an illustration for quantile splits).
To address the issues with the uniform quantile splits, we can resolve to using a different distribution for $p(U_{i,\ell}|S_{i,\ell})$ that satisfies the constraint mentioned above. 

\begin{figure*}[htp]
\centering
\vspace{-1em}
\begin{subfigure}{0.7\textwidth}
\hspace{-2em}
\includegraphics[page=5,width=\textwidth, trim=0cm 0.5cm 1cm 0.5cm, clip]{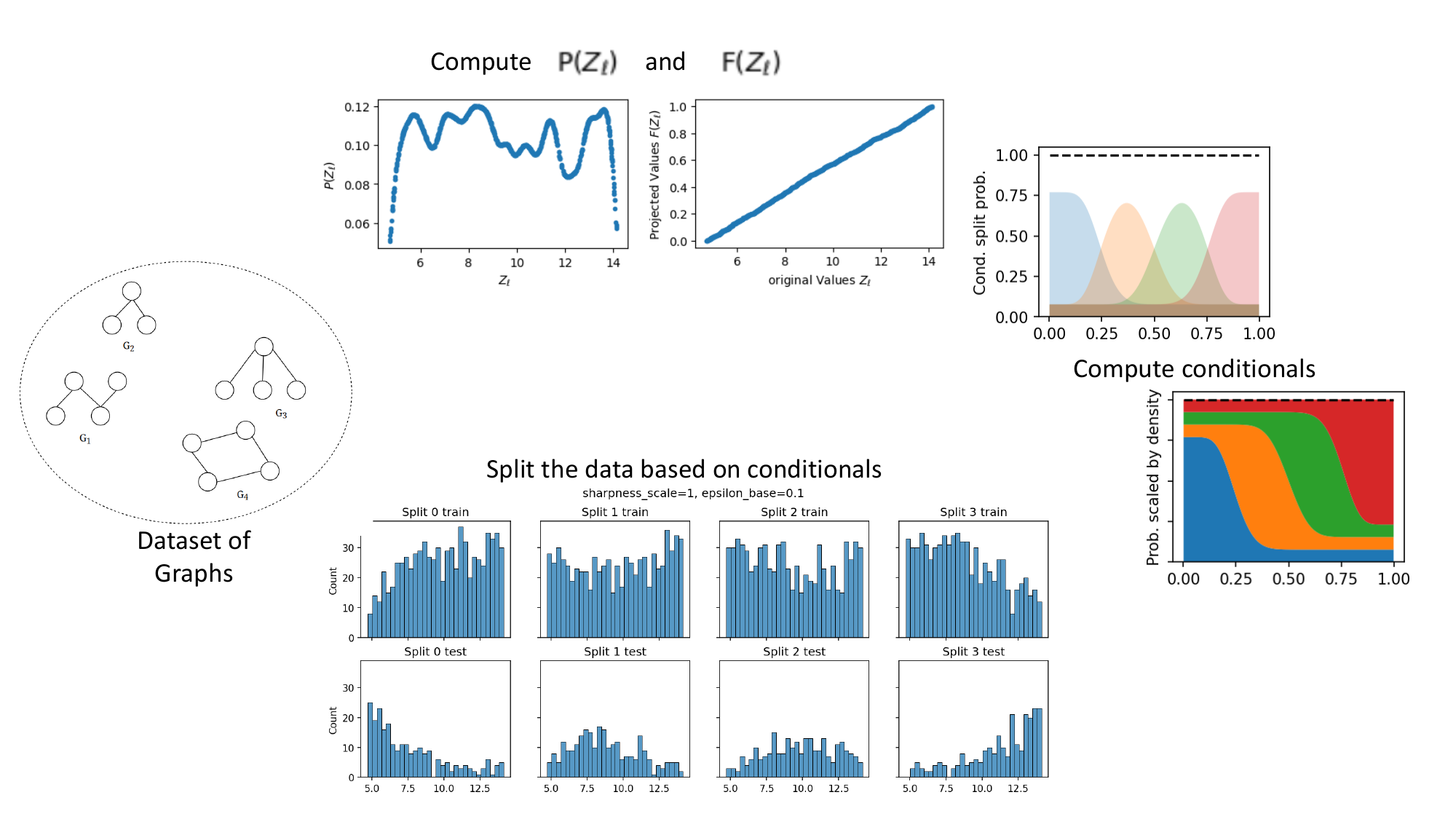}
\caption{}
\label{fig:scv-illustration}
\end{subfigure}
\begin{subfigure}{0.27\textwidth}
\hspace{-3em}
\includegraphics[width=2.8\textwidth]{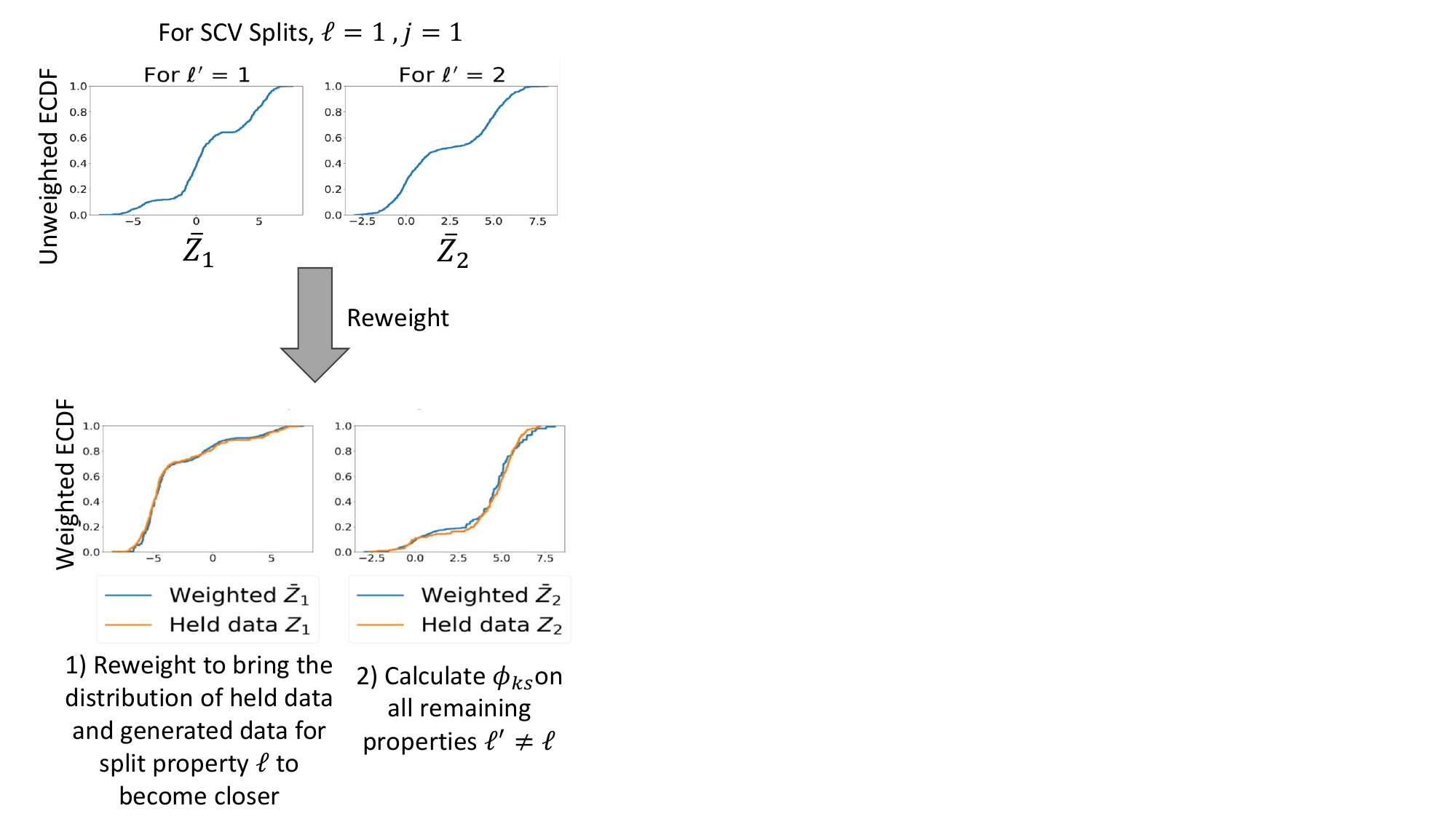}
\caption{}
\label{fig:scv-illustration-2}
\end{subfigure}

\vspace{1em}
\caption{(a) The VV splitting process has 5 steps: 1) compute the relevant graph properties for each graph, 2) project samples via the CDF to a uniform distribution 3) Define the split distributions via a mixture of Beta distributions and a unifrom distribution, 4) Compute the split probability conditioned on $U_\ell$ using Bayes rule, and finally 5) sample the split variable based on these conditional probabilities. This will result in different splits. In the histograms above we plot the distribution of the split property $\ell$ in both the train and held parts for different splits. (b) An illustration of the reweighting process performed by VV for one of the splits (for $j = 1 $ and $\ell = 1$ where the total number of properties is $m = 2$ ).\label{fig:combined}}

\end{figure*}

\paragraph{Using Beta Distributions to Create Smoothed Quantile Splits:}
\label{sec:beta_splits}

For the first issue of unnaturally sharp distribution edges, inspired by empirical Beta copula models \citep{segers2017empirical}, we first note that a simple mixture of Beta distributions will have a uniform marginal distribution---the exact property we need for splitting (ie. satisfies $U_{i,\ell} \sim \textnormal{Uniform}[0,1]$ ).
Specifically, a mixture of $k$ Beta distributions with parameters defined as: $\alpha_j = j$ and $\beta_j = k + 1 - j$ and for $j \in \{1,2,\cdots, k\}$ will have a uniform distribution \citep{segers2017empirical}, i.e., $\sum_j p_{\mathrm{Beta}[\alpha_j,\beta_j]}(U) = p_{\mathrm{Unif}[0,1]}(U) $. Thus we can choose $p(U_{i,\ell}|S_{i,\ell}) = p_{\mathrm{Beta}[\alpha_j,\beta_j]}(U_{i,\ell})$, which will lead to $\sum_{j} p(S_{i,\ell} = j) p_{\mathrm{Beta}[\alpha_j,\beta_j]}(U_{i,\ell}|S_{i,\ell}=j) = p_{\mathrm{Unif}[0,1]}(U_{i,\ell}) = p(U_{i,\ell})$.  

However, we still notice some potential issues with this approach, first the intervals have more overlap than what we want,  which will be the case if $k$ is relatively small, and second this approach doesn't guarantee a support for all the regions in the current split. 
To deal with the first concern, we propose adding a sharpness scale ($\sharpness$) that acts to sharpen the edges of the distribution (i.e., making the splits more vertical). The sharpness scale identifies the number of adjacent Beta distributions to mix together for a single split distribution. 
Specifically, if we use $\sharpness \cdot k$ to be the total number of Beta distributions, then we can let each of the split distributions be mini mixtures of adjacent distributions, i.e., $\pbetamix(U_{i,\ell}|S_{i,\ell} = j ) = \frac{1}{\sharpness}\sum_{a=1}^{\sharpness} p_{\mathrm{Beta}[\alpha_{j,a}, \beta_{j,a}]}(U_{i,\ell})$, where $\alpha_{j,a} = (j - 1)\sharpness + a$ and $\beta_{j,a} = \sharpness k + 1 - \alpha_{j,a}$.
Using this setting will increase the sharpness of the splits, there by decreasing the regional overlap between the adjacent splits. As $\sharpness$ increases we combine more Beta distributions together, and this will lead to a more concentrated and refined  edges of the distribution. In the limit as $\sharpness$ goes to infinity, we would recover the quantile splits---thus, this can be seen as relaxation of quantile-based splitting. For the second issue due to zero or near zero support, while our smoothed quantiles can alleviate this somewhat, the support may still be near-zero in certain regions. 
Thus there will be no samples in training corresponding to the held-out samples. We approach this by mixing our previously defined mixture of the Beta distributions with the uniform distribution as follows: 
\begin{equation}
\label{eqn:dist_choice}
p(U_{i,\ell}|S_{i,\ell}) = (1-\epsilon) \pbetamix(U_{i,\ell}|S_{i,\ell}) + \epsilon \cdot p_{\mathrm{Unif}[0,1]}(U_{i,\ell})
\end{equation}
where $\epsilon \in [0,1]$ is the mixing parameter. This addition will ensure a minimum representation of each region of support in our split, and if $\epsilon=1$, we will get completely random splitting which is similar to standard CV splitting. We summarize our whole splitting procedure in \autoref{fig:scv-illustration}, and illustrate the effects of different parameters on our splits in \autoref{fig:explain_all}. Additional figures for illustrating the Beta related splits are in \autoref{sec:illustrate_beta_splits}. We also prove  \autoref{thm:splits}, which states that this current choice of $p(U_{i,\ell}|S_{i,\ell})$ will yield biased splits, in \autoref{sec:proofs}.

\begin{restatable}{proposition}{thmone}
\label{thm:splits}
For any $\epsilon < 1$ and $\sharpness \in \{1,2,\dots\}$ and assuming the splits are equal size in expectation, i.e., $p(S_{i,\ell})=\frac{1}{k}$, if
$p(U_{i,\ell}|S_{i,\ell}) 
    =(1\!-\!\epsilon)\pbetamix(U_{i,\ell}|S_{i,\ell}) + \epsilon p_{\mathrm{Unif}[0,1]}(U_{i,\ell})\,,
$
where
\begin{align*}
    \pbetamix(U_{i,\ell}|S_{i,\ell}\!=\!j) 
    &=\frac{1}{\sharpness}\sum_{a=1}^\sharpness p_{\mathrm{Beta}[\alpha_{j,a},\beta_{j,a}]}(U_{i,\ell}) 
\end{align*}
and where $\alpha_{j,a} \triangleq (j-1) \sharpness + a$ and $\beta_{j,a} \triangleq \sharpness k + 1 - \alpha_{j,a}$,

then $p(U_{i,\ell})=\mathrm{Uniform}[0,1]$ and the splits will be biased, i.e., $p(S_{i,\ell}|G_i) = p(S_{i,\ell}|U_{i,\ell}) \neq p(S_{i,\ell})$ or equivalently $I(S_{i,\ell}, G_i) > 0$.

\end{restatable}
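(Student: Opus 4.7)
The proposition makes two claims: (i) the marginal $p(U_{i,\ell})$ is $\mathrm{Uniform}[0,1]$, and (ii) the conditional splits are biased, i.e.\ $p(S_{i,\ell}\mid U_{i,\ell})\neq p(S_{i,\ell})$. The plan is to handle these in order, because (ii) will use the closed-form conditional we derive while verifying (i). Throughout, I will use the key ingredient from the empirical Beta copula literature: for any integer $N\ge 1$, the uniform mixture $\frac{1}{N}\sum_{\alpha=1}^{N} p_{\mathrm{Beta}[\alpha,N+1-\alpha]}(u)$ equals $1$ for $u\in[0,1]$. This is just the density-identity version of the fact that the order statistic index of a uniform sample is itself uniform.

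For (i), I will compute $p(U_{i,\ell})$ by the law of total probability. Substituting $p(S_{i,\ell}=j)=1/k$ and the form of $p(U_{i,\ell}\mid S_{i,\ell})$, I get
\begin{equation*}
p(U_{i,\ell}) \;=\; (1-\epsilon)\,\frac{1}{k}\sum_{j=1}^{k}\pbetamix(U_{i,\ell}\mid S_{i,\ell}=j)\;+\;\epsilon\, p_{\mathrm{Unif}[0,1]}(U_{i,\ell}).
\end{equation*}
Unfolding $\pbetamix$ gives a double sum $\frac{1}{k\sharpness}\sum_{j=1}^{k}\sum_{a=1}^{\sharpness}p_{\mathrm{Beta}[\alpha_{j,a},\beta_{j,a}]}(U_{i,\ell})$. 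The key observation is that the map $(j,a)\mapsto \alpha_{j,a}=(j-1)\sharpness+a$ is a bijection from $\{1,\dots,k\}\times\{1,\dots,\sharpness\}$ onto $\{1,\dots,k\sharpness\}$, and $\beta_{j,a}=k\sharpness+1-\alpha_{j,a}$, so the double sum is precisely $\sum_{\alpha=1}^{k\sharpness} p_{\mathrm{Beta}[\alpha,k\sharpness+1-\alpha]}(U_{i,\ell})$. Invoking the empirical Beta copula identity with $N=k\sharpness$, this averages to $1$ on $[0,1]$. Plugging back yields $p(U_{i,\ell})=(1-\epsilon)\cdot 1+\epsilon\cdot 1 = p_{\mathrm{Unif}[0,1]}(U_{i,\ell})$.

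For (ii), applying Bayes' rule and the result just established gives the closed form
\begin{equation*}
p(S_{i,\ell}=j\mid U_{i,\ell}) \;=\; \frac{1}{k}\bigl[(1-\epsilon)\pbetamix(U_{i,\ell}\mid S_{i,\ell}=j)+\epsilon\bigr].
\end{equation*}
To conclude bias, I must show this is not the constant $1/k$ as a function of $U_{i,\ell}$. Since $\epsilon<1$, it suffices to prove that $\pbetamix(\cdot\mid S_{i,\ell}=j)$ is not identically $1$ on $[0,1]$ for at least one $j$. But for any fixed $j$ the mixture uses only the $\sharpness$ Beta components with $\alpha\in\{(j-1)\sharpness+1,\dots,j\sharpness\}$ out of the full set of $k\sharpness$; except in the degenerate case $k=\sharpness=1$ (where there is a single split and no splitting is possible), at least one of these Beta densities has $\alpha\ne\beta$ or $\alpha,\beta>1$, so the mixture is non-constant. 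This gives $p(S_{i,\ell}\mid U_{i,\ell})\not\equiv p(S_{i,\ell})$, which implies $I(S_{i,\ell};U_{i,\ell})>0$, and since $U_{i,\ell}$ is a deterministic function of $G_i$, also $I(S_{i,\ell};G_i)>0$.

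The main obstacle is the algebraic reindexing step: I need to convince myself that as $(j,a)$ ranges over $\{1,\dots,k\}\times\{1,\dots,\sharpness\}$, the parameter $\alpha_{j,a}$ hits each integer in $\{1,\dots,k\sharpness\}$ exactly once, with $\beta_{j,a}$ taking the complementary value $k\sharpness+1-\alpha_{j,a}$ automatically. Once this bookkeeping is done, the proof reduces to citing the Beta-mixture identity. A minor subtlety worth a sentence in the writeup is excluding the trivial case $k=1$ (or more precisely $k\sharpness=1$) from the bias claim, since otherwise there is only one possible value for $S_{i,\ell}$.
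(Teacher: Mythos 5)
Your proposal is correct in substance and follows essentially the same route as the paper: part (i) is the identical reindexing of the double sum into $\frac{1}{k\sharpness}\sum_{r=1}^{k\sharpness}p_{\mathrm{Beta}[r,\,k\sharpness+1-r]}$ plus the empirical Beta copula identity, and part (ii) — whether phrased via your Bayes-rule posterior $p(S_{i,\ell}=j\mid U_{i,\ell})=\frac{1}{k}[(1-\epsilon)\pbetamix(U_{i,\ell}\mid S_{i,\ell}=j)+\epsilon]$ or via the paper's decomposition $I(U,S)=\E_{p(S)}[\mathrm{KL}(p(U\mid S),p(U))]$ — reduces to the same fact, that $\pbetamix(\cdot\mid S=j)$ is not the uniform density. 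One inference in your part (ii) is not valid as written, though: from ``at least one of these Beta densities has $\alpha\ne\beta$ or $\alpha,\beta>1$'' you conclude the mixture is non-constant, but non-constant components do not preclude a constant mixture — the full mixture over all $k\sharpness$ components is itself uniform despite every component being non-constant, so your inference would ``prove'' too much. A quick patch (which also makes your argument strictly tighter than the paper's bare assertion): for $k\ge 2$ take $j=1$; every component of $\pbetamix(\cdot\mid S=1)$ has second parameter $\beta_{1,a}=k\sharpness+1-a\ge 2$, so each component, and hence the mixture, vanishes at $u=1$ and cannot be identically $1$. Also note the degenerate case is $k=1$ for any $\sharpness$ (then the per-split mixture is the full mixture and is uniform), not $k=\sharpness=1$ or $k\sharpness=1$ as you wrote; the proposition implicitly assumes $k\ge 2$. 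Your closing step, $I(S;G)\ge I(S;U)>0$ because $U$ is a deterministic function of $G$, is fine and matches the paper's conclusion.
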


\begin{figure*}[ht]
\centering
  \begin{subfigure}{0.18\textwidth}
\includegraphics[width=\linewidth]{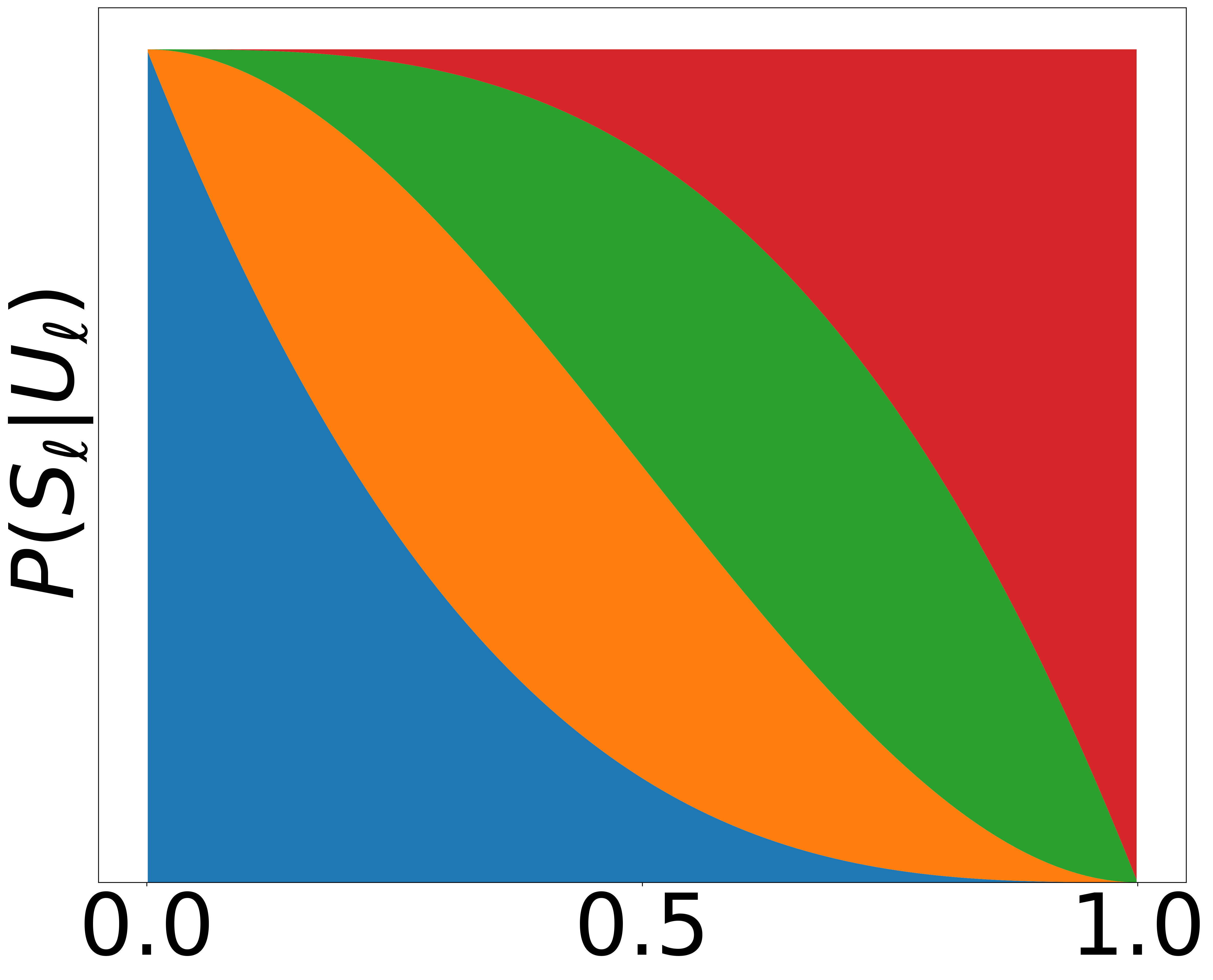}
    \caption{$\sharpness = 1, \epsilon = 0$} \label{fig:justbeta}
  \end{subfigure}\hspace*{\fill}   \begin{subfigure}{0.18\textwidth}
\includegraphics[width=\linewidth]{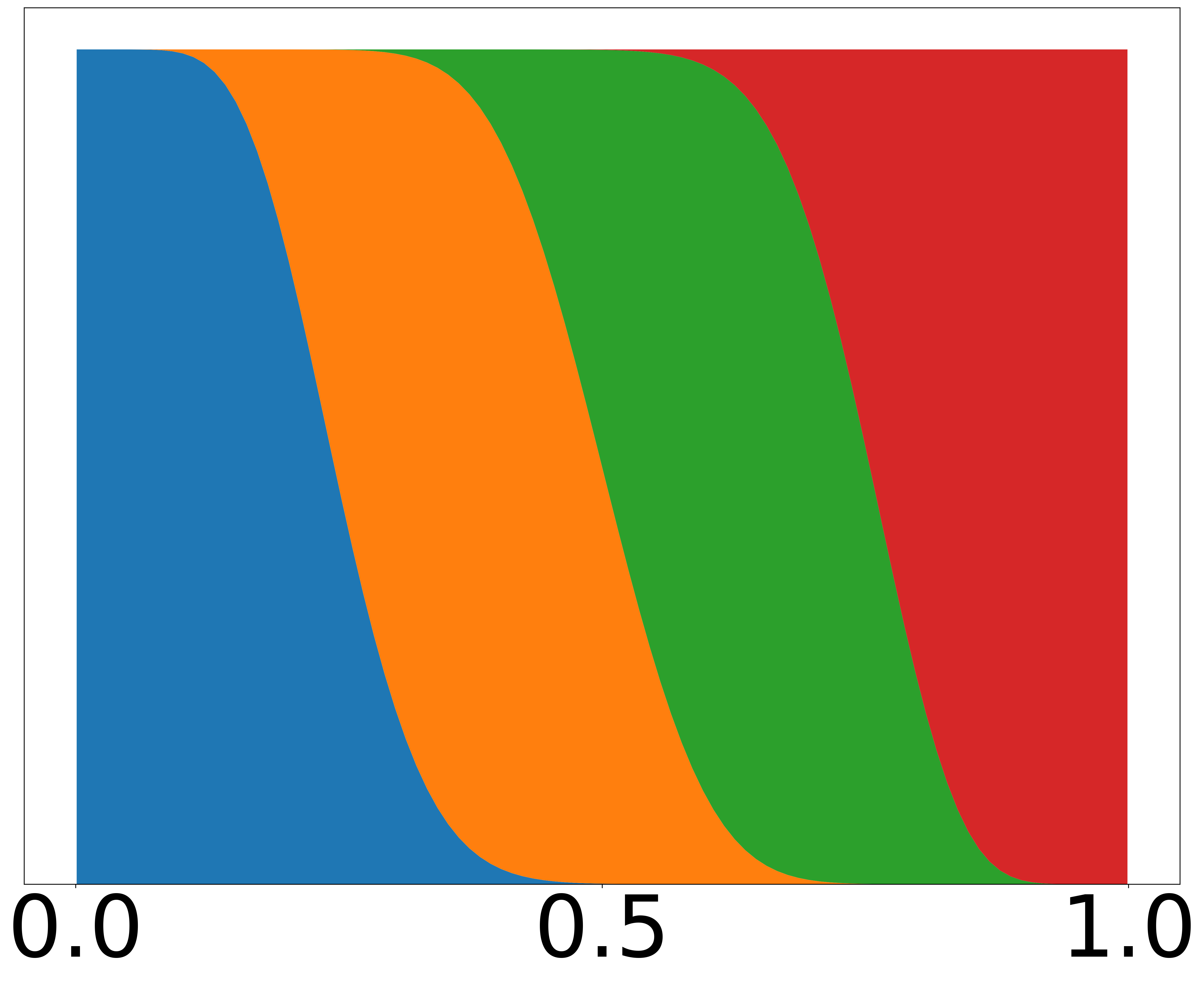}
    \caption{$\sharpness = 10, \epsilon = 0$} \label{fig:beta_sharp}
  \end{subfigure}\hspace*{\fill}   \begin{subfigure}{0.18\textwidth}
\includegraphics[width=\linewidth]{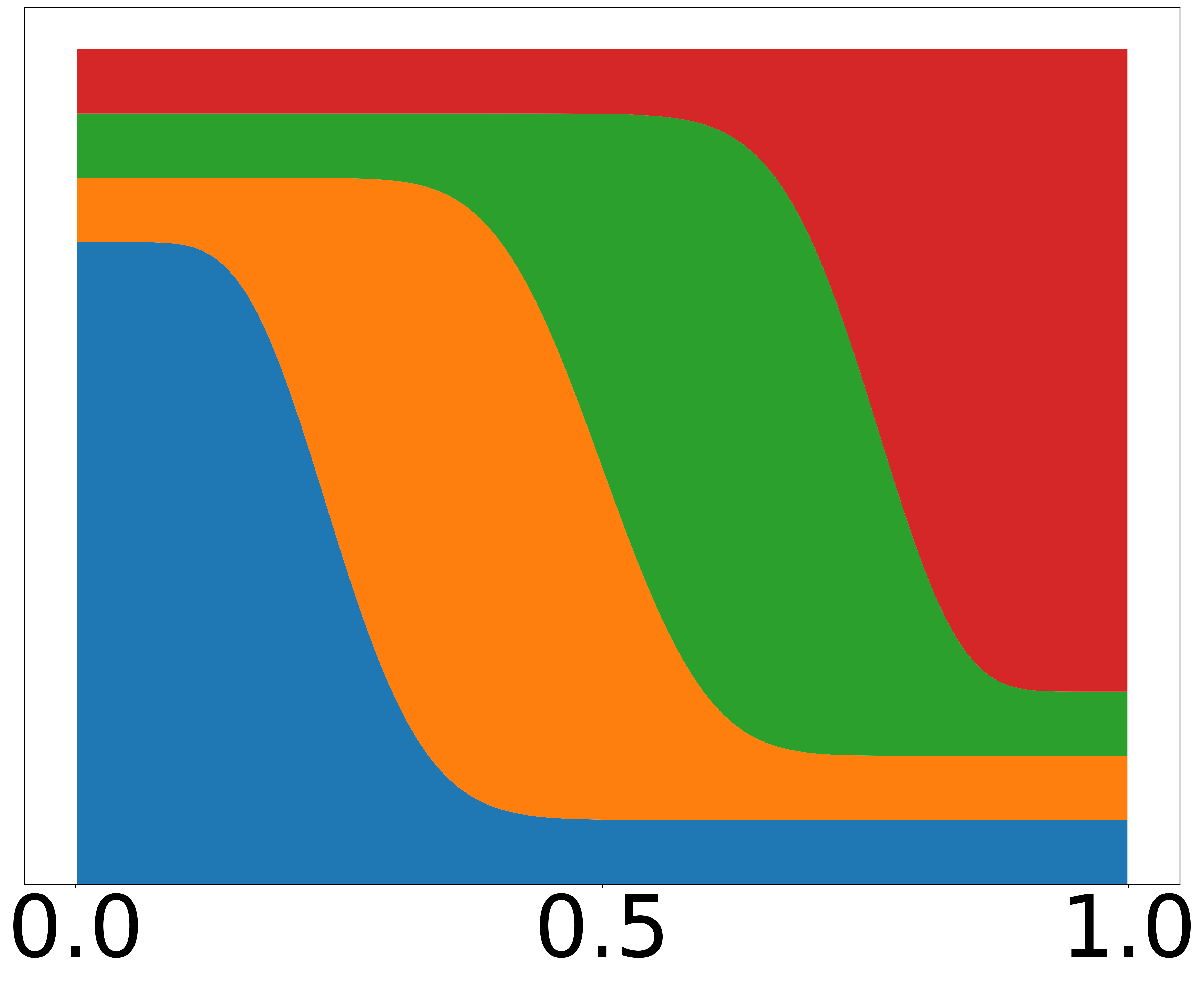}
    \caption{$\sharpness = 10, \epsilon = 0.1$} \label{fig:beta_sharp_eps}
  \end{subfigure}
\hspace*{\fill}
   \begin{subfigure}{0.18\textwidth}
\includegraphics[width=\linewidth]{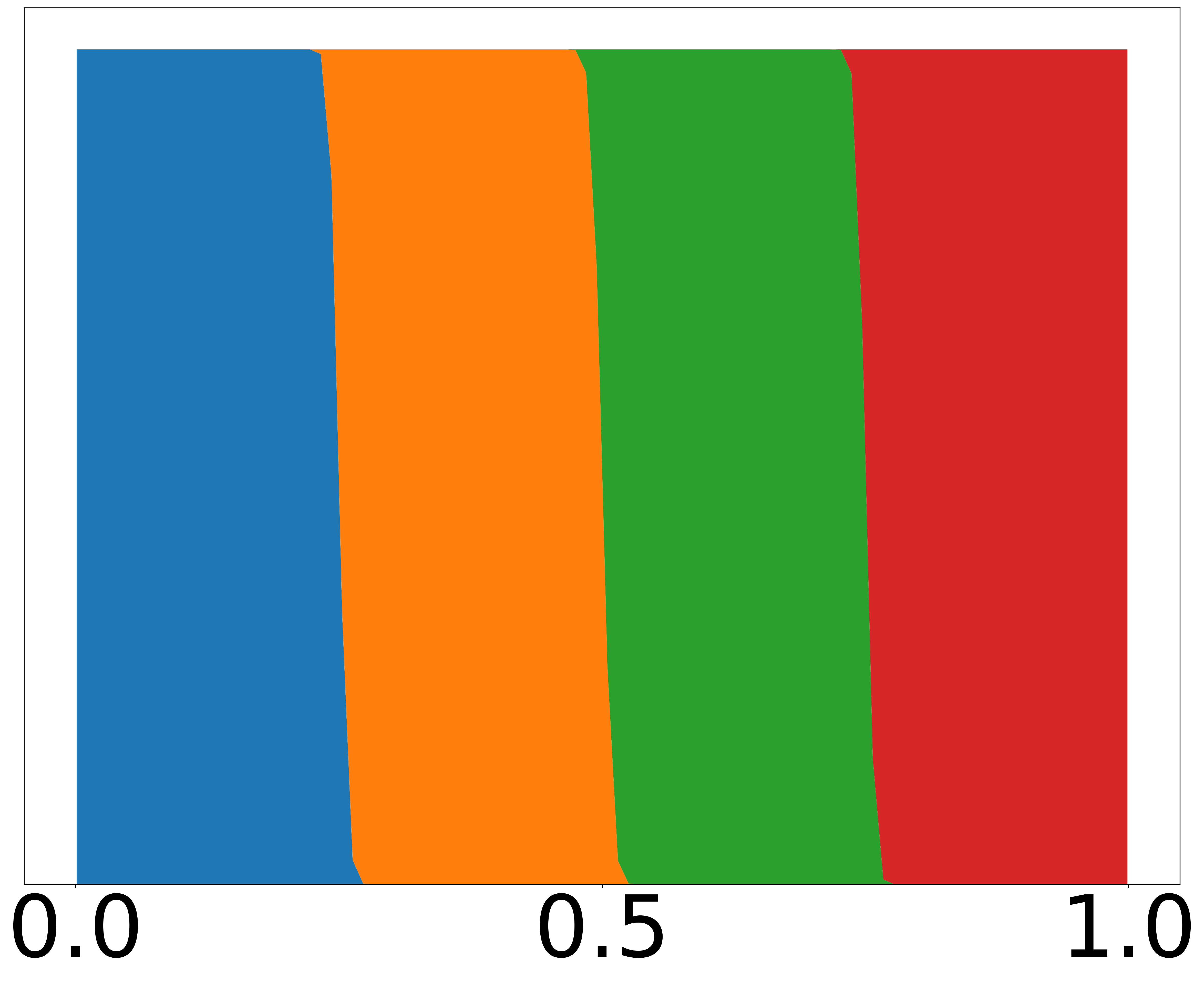}
    \caption{$\sharpness = 1000, \epsilon = 0$}\label{fig:quantile_like}
  \end{subfigure}
\hspace*{\fill}
   \begin{subfigure}{0.18\textwidth}
\includegraphics[width=\linewidth]{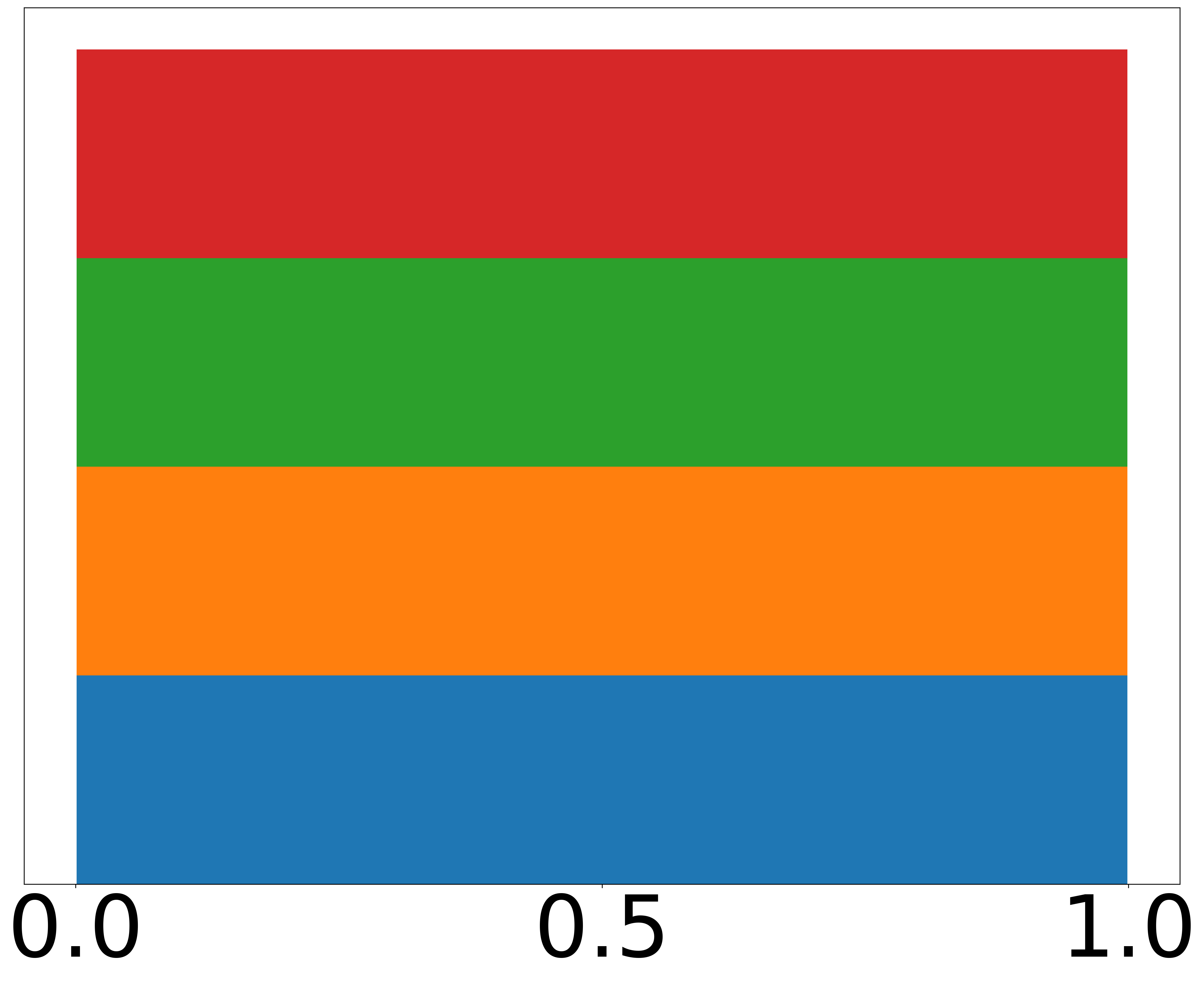}
    \caption{$\sharpness = 1, \epsilon = 1$} \label{fig:cv_like}
  \end{subfigure}
\caption{
  The Figure showcases stacked conditional split probabilities $p(S_{\ell}|U_{\ell})$ obtained using \name with different sharpness $\sharpness$ and uniform mixing parameter $\epsilon$.
  In (a), the split distributions may overlap too much when $\sharpness=1$.
  In (b), the splits are sharper but still smooth.
  In (c), the uniform mixing parameter allows all splits to have some support.
  In (d) and (e), we demonstrate the extremes of our approach that yield either quantile splits as $\sharpness \to \infty$ or uniform splits as in standard CV if $\epsilon = 1$.
  }
  \label{fig:explain_all}

  \end{figure*}

\subsection{Step 2: Defining a Meta-metric to Adjust For Shifted Splits} \label{sec:weights}

The second step in our approach is re-weighting the samples generated by our model to account for initially training the model with a biased dataset. The generative model ($\Omega$) was initially trained with $\Xtrain^{(\ell,j)}$ to produce samples $\Xgen^{(\ell,j)}$ (whose true distribution is $\qGbar$). Those generated samples -or more precisely the properties of such generated samples which we can denote by $\bar{Z}$- will follow a similar distribution to that of the training dataset (assuming that the model doesn't underfit), but will be different from the distribution of the held-out samples $\Xtest^{(\ell,j)}$. We will refer to the true distributions of a property $\ell$ in $\Xtrain^{(\ell,j)}$ and $\Xtest^{(\ell,j)}$ as $p(Z_{i,\ell}|S_{i,\ell} \neq j) $ and $p(Z_{i,\ell}|S_{i,\ell} = j)$ respectively, and to that of the generated samples as $q(\bar{Z}_{i,\ell}|\theta=\theta_{\Omega(\Xtrain^{(\ell,j)})}^*)$.

To account for such a shift in distribution, we need to unbias the generated samples, and one way of doing so is by re-weighting them, thus we define the importance weight of each sample $\Gbar_i$ in $ \Xgen^{(\ell,j)}$ with property $\bar{Z}_{i,\ell}$ as follows: $W^{(\ell,j)}(\Gbar_i) :=$

\begin{equation}
\begin{split}
\label{eqn:wts}
 W^{(\ell,j)}(\bar{Z}_i= h(\bar{G_i})) :=  \frac{p(\bar{Z}_{i,\ell}|S_{i,\ell}=j)}{q(\bar{Z}_{i,\ell}|\theta=\theta_{\Omega(\Xtrain^{(\ell,j)})}^*)}
\end{split}
\end{equation}
\\

Using the definition above, we can extend our graph multi sets to a weighted version where $\mathcal{G}_W = \ldblbrace (G_i, W(G_i)): G_i \in \mathcal{G} \rdblbrace$, so for $\Xgen$ we will have a corresponding $\mathcal{G}_{\sgen,W}$. Furthermore, let $\phi$ be a metric that can compare two distributions and handle weighted samples, we can define our meta-metric as:

\begin{align}
    \phi_{\textnormal{\name}}(\Xheldw, \mathcal{G}_{\sgen,W}^{(\ell, j)}; \phi) = \phi(\Xheldw,\mathcal{G}_{\sgen,W}^{(\ell,j)})
\end{align}

For two-sample tests, we will use an estimate of the number of effective samples based on \cite[Sec. 12.4]{monahan2011numerical} defined as:
\begin{equation}
    N_{\text{eff}}(\mathcal{G}_{\sgen,W}^{(\ell, j)})= \frac{(\sum_{(\Gbar_i, W(\Gbar_i)\in \mathcal{G}_{\sgen,W}^{(\ell, j)}} W^{(\ell,j)}(\Gbar_i))^2}{\sum_{(\Gbar_i, W(\Gbar_i)\in \mathcal{G}_{\sgen,W}^{(\ell, j)}} W^{(\ell,j)}(\Gbar_i)^2}.
    \label{eqn:neff}
\end{equation}

\subsubsection{Concrete Instantiation of the Metric via KS Statistics}
As one instantiation for our generic framework, we can choose $\phi$ to be a weighted version of a two-sample KS statistic.  One of the reasons for choosing the KS statistic is that its values are always between 0 and 1, and its weighted version can handle weighted samples from two distributions. Given any two weighted graph datasets $\cG_{1,W_1}$ and $\cG_{2,W_2}$. The empirical weighted KS statistic of a specific graph property function $h(\cdot)$ with $Z_{i} = h(G_i)$ can be defined as follows:

\begin{align}
&\phi_{\text{KS}}(\cG_{1,W_1}, \cG_{2,W_2}; h) = \sup_{G_i} |\hat{F}_{W1}(h(G_i)) - \hat{F}_{W2}(h(G_i)) | \notag \\
    &\quad= \sup_{Z_{i}} |\hat{F}_{W1}(Z_{i}) - \hat{F}_{W2}(Z_{i})|,
\label{eq:phi_KS}
\end{align}
where $\hat{F}_{W1}(z) = \frac{1}{\sum_i W_i} \sum_i W_i {\bf 1}_{(Z_{i} \leq z)}$ is the weighted empirical CDF associated with $\cG_{1,W_1}$ and similarly for $\hat{F}_{W2}(z)$ associated with $\cG_{2,W_2}$. 
For our specific case, we want to obtain that metric between our unweighted  $\Xheldw$ dataset (we can assume it's re-weighted by ones) and the re-weighted generated samples $\cG_{\sgen,W}^{(\ell,j)}$ with the chosen graph property function $h_{\ell'}$ we would like to evaluate on, therefore we'll have:
\begin{equation}
    \phi_{\textnormal{\name}}(\Xtest^{(\ell, j)}, \Xgen^{(\ell, j)};  \phi_{\text{KS}}, h_{\ell'}) = \phi_{\text{KS}}(\Xheldw,\cG_{\sgen,W}^{(\ell,j)};h_{\ell'}),
    \label{eqn:phi_ks}
\end{equation}
where $\ell' \in \{1,..,m\}$ and where $\ell' \neq \ell$. 
For an illustration of our unbiasing and re weighting procedure refer to \autoref{fig:scv-illustration-2} (A full version of the figure is presented in \autoref{sec:AppB}). We also prove that this chosen metric will converge to zero if the model generalizes well with the rate of convergence depending  on the number of samples in the dataset.  
Below we state the theorem, while the full proof is presented in \Cref{sec:proofs}. 

\begin{restatable}[$\phi_{\text{KS}}(\Xheldw, \cG_{\textnormal{gen},W}^{(\ell,j)}; h_{\ell'})$ consistent]{theorem}{thmtwo}
\label{thm:thmone}
Using \name for generating data splits and corresponding datasets $\Xtrain^{(\ell,j)}$, $\Xtest^{(\ell,j)}$ and using an implicit generator $\Omega$ trained on $\Xtrain^{(\ell,j)}$ to generate data $\Xgen^{(\ell,j)}$.

Then, if $\Xgen^{(\ell, j)}$ is generated with the same distribution as $\Xtest^{(\ell, j)}$,  
for any $\epsilon \in [0,1]$,

\begin{equation}
    \begin{aligned}[b]
         P(&\phi_{\text{KS}}(\Xheldw, 
     \cG_{\textnormal{gen},W}^{(\ell,j)}; h_{\ell'})  >  \epsilon)  \leq \\ &4 \exp\left(- 2  \min(|\Xgen^{(\ell, j)}| , |\Xtest^{(\ell, j)}|) \left(\frac{\epsilon}{2}\right)^2  \right),
        \end{aligned}
\end{equation}

\end{restatable}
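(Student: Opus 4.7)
The plan is to pivot through the common target CDF and apply the Dvoretzky--Kiefer--Wolfowitz (DKW) inequality twice. Write $F^*$ for the CDF of $h_{\ell'}(G)$ under the test-side conditional $p(G \mid S_\ell = j)$, i.e., the common distribution of both samples once the hypothesis is invoked.

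First, I would use the hypothesis to collapse the importance weights. Under the assumption that $\Xgen^{(\ell,j)}$ is generated with the same distribution as $\Xtest^{(\ell,j)}$, the numerator $p(\bar Z_{i,\ell} \mid S_\ell = j)$ and the denominator $q(\bar Z_{i,\ell} \mid \theta^*_{\Omega(\Xtrain^{(\ell,j)})})$ in the weight $W^{(\ell,j)}$ defined in (\ref{eqn:wts}) coincide pointwise, so the weight is identically one. Consequently both weighted empirical CDFs appearing in (\ref{eq:phi_KS}) reduce to ordinary empirical CDFs of iid samples of $h_{\ell'}(G)$ drawn from $F^*$; I will denote them $\hat F_{\text{test}}$ and $\hat F_{\text{gen}}$ respectively.

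Next, I would insert $F^*$ as a pivot and apply the triangle inequality:
\[
\sup_z \bigl|\hat F_{\text{test}}(z) - \hat F_{\text{gen}}(z)\bigr| \leq \sup_z \bigl|\hat F_{\text{test}}(z) - F^*(z)\bigr| + \sup_z \bigl|F^*(z) - \hat F_{\text{gen}}(z)\bigr|.
\]
The event $\{\phi_{\text{KS}} > \epsilon\}$ then implies at least one of the two summands on the right exceeds $\epsilon/2$, so a union bound gives
\[
P(\phi_{\text{KS}} > \epsilon) \leq P\!\bigl(\sup_z |\hat F_{\text{test}}(z) - F^*(z)| > \epsilon/2\bigr) + P\!\bigl(\sup_z |\hat F_{\text{gen}}(z) - F^*(z)| > \epsilon/2\bigr).
\]
Finally, I would invoke the classical DKW inequality on each one-sample KS term: for iid samples of size $n$ drawn from $F^*$, the one-sample KS statistic exceeds $\epsilon/2$ with probability at most $2\exp(-2 n (\epsilon/2)^2)$. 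Applied with $n = |\Xtest^{(\ell,j)}|$ and $n = |\Xgen^{(\ell,j)}|$ respectively and then loosening each exponent by replacing the sample size with $\min(|\Xtest^{(\ell,j)}|, |\Xgen^{(\ell,j)}|)$, the two bounds combine to the claimed $4\exp(-2 \min(|\Xgen^{(\ell,j)}|, |\Xtest^{(\ell,j)}|) (\epsilon/2)^2)$.

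The principal subtlety is concentrated in the very first step: one must read the hypothesis as forcing the reweighted generator distribution to match the test distribution exactly, so that the importance weights are a.s.\ equal to one and the weighted empirical CDFs behave like ordinary empirical CDFs. Absent this reduction, one would need a genuinely weighted DKW-type concentration inequality involving the effective sample size $N_{\text{eff}}$ from (\ref{eqn:neff}) instead of the raw sample counts; that is a noticeably more delicate argument, but it is unnecessary under the stated hypothesis and the remainder of the proof is a textbook triangle-inequality plus DKW combination.
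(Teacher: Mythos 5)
Your proposal is correct and follows essentially the same route as the paper's proof: pivot through the true CDF of the test property, apply the triangle inequality for the KS statistic, and bound each one-sample deviation via the Dvoretzky--Kiefer--Wolfowitz inequality before loosening to $\min(|\Xgen^{(\ell,j)}|,|\Xtest^{(\ell,j)}|)$. Your version is in fact slightly tidier, since you use a clean $\epsilon/2$ union bound rather than the paper's product of complement probabilities, and you explicitly justify the collapse of the importance weights to one under the hypothesis, a point the paper relegates to a footnote about the unweighted empirical CDF.
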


\paragraph{Implementation of \fullname}
The comprehensive implementation details are presented in \autoref{sec:implementation-other}, but we summarize a few key points here. First, we use a smoothed version of the empirical CDF instead of the true CDF of graph properties. Second, we Kernel Mean Matching (KMM) \citep{Huang2006CorrectingSS} to estimate the importance weights. Lastly, to sample from the generative model, we iteratively generate samples until the count of effective samples reaches a predetermined fixed number.

\section{Experiments}
In our experiments, we first demonstrate the effectiveness of our method in measuring the generalization of graph generative models. Once this foundation is established, we proceed to compare different graph generative models using our approach. 

\subsection{Validating the Effectiveness of \name} 
\label{sec:exp_val}
For the experiments in this section, we focus on one split property $\ell$ and one split index $j$ in \name, similar to concentrating on a single fold in the traditional CV scheme, which we will denote as HV for Horizontal Validation.
 We also opt to use data with a known distribution, providing access to ground truth for generating additional samples when necessary. We generate 500 Erdős-Rényi random graphs \citep{ERGraphs} with $n_{\textnormal{nodes}} = 20$ and $p=0.5$. Additionally, we generate 500 samples of the community dataset (which we refer to as Comm20), consisting of two equally sized communities with $n_{\textnormal{nodes}}$ ranging between 6 and 10 per community (12-20 in total). The probability of an edge within each community is $p=0.7$, while across communities, it's a function of the number of nodes within a community, $p_{int} = \frac{0.1}{n_{\textnormal{nodes}/2}}$.

In our experiments, we explore five possible model types: 1) \textbf{E.Memo}, Exact Memorization, represents a model that memorizes the original training data, then sample ``new" graphs only by bootstrapping from it. 2) \textbf{A.Memo}, Approximate Memorization, represents a model that memorizes the training data but introduces subtle variations during sampling by adding or removing an edge from graphs in the memorized data.
3) \textbf{Oracle}, represents a model that is  capable of generating data directly from the ground truth distribution (For Erdős-Rényi graphs the parameters of such model is $n_{\textnormal{node}} = 20$ and $p = 0.5$, for Comm20, the parameters of such model is $n_{\textnormal{node}} = 12-20$ and $p = 0.7$), which serves as the benchmark for the ideal generative model; 4) \textbf{Close}, representing a model that posses the ability to generate data from a distribution that is somewhat ``close" to the ground truth distribution (generating new graphs for Erdős-Rényi with $n_{\textnormal{node}} = 20$ and $p = 0.45$, and for Comm20 with $n_{\textnormal{node}} = 12-20$ and $p = 0.65$). 5) \textbf{Far}, representing a model that simulates under fitting as it generates data from a distribution that is not close enough to the ground truth (generating new graphs for Erdős-Rényi with $n_{\textnormal{node}} = 20$ and $p=0.4$, and for Comm20 with  $n_{\textnormal{node}} = 12-20$ and $p = 0.6$).

 To illustrate the usefulness of our \name approach in model selection, we use \name with $\epsilon = 0.01$, $\ell = 2$ (corresponding to the split property: number of triads), $\sharpness = 10$ and $k = 5$ to create a train-test split. We choose to hold out the last split corresponding to $j = 5$ and we will refer to that part as v-test and to the parts corresponding to $j = 1, 2, 3, 4$ as train. Next we use \name again to further split the train part. In this case we use $\epsilon = 0.01$, $\ell = 2$, $\sharpness = 10$ and $k = 4$ to create  a train-val split. We again choose to hold out the last split corresponding to $j = 4$ and will refer to it as v-val, while the splits $j = 1, 2, 3$ are referred to as v-train. Then in an alternate setting, we use HV (which we can achieve by adjusting our model parameters to $\epsilon = 1$ and $\sharpness = 1$) also on the train part to create train-val splits or folds, we assign the label of h-val to one of those folds (since they are all similar) and the rest we label h-train. The general idea is that v-test is an area we are interested in but don't have access to, traditionally we would use HV and splits like h-train/h-val to choose the best model. However we argue that this approach isn't ideal if the goal is having a model capable of generalizing to areas of thin support, and that using a split like v-train/v-val can help us choose the best model for that task. To illustrate this, we train the five models that we previously introduced on Erdos-Renyi and Comm20 datasets, each of the models is trained and tested on train/v-test, v-train/v-val and h-train/h-val (those splits are illustrated in \autoref{fig:splitting} in \autoref{sec:splitting}), we then report the averaged KS for testing on the average degree ($\phi_{ks}^D$) and the average clustering coefficient ($\phi_{ks}^C$) properties for all models and split types in \autoref{fig:synthetic-vertical-validation}, where we see some trends. First, in the results on h-val, we see that E.MEMO and A.MEMO seem to have lower KS values than oracle, this can be explained by the fact that both these models care only about in-distribution performance, and as such can achieve low results compared to the oracle that should have been the best model if we hadn't initially held out v-test ie. the oracle produced data covering all of the support, but our h-val only covers part of that support and hence HV viewed the performance as inferior. Second, we notice that the oracle for both of our cases (v-val and v-test) is indeed regarded as the superior model since our goal is to give higher rankings to models that generalizes better. Third, as a side effect of our setting we were also able to detect that E.MEMO and A.MEMO are models that are inferior to oracle since they won't be able to generalize.
 To summarize, if a user was interested in a model capable of generalizing to regions of thin support, then the conventional HV setting is misleading in model selection, while a \name setting is favourable.

\begin{figure*}[!ht]
    \centering
    \begin{subfigure}[t]{\columnwidth}
    \centering
        \includegraphics[width=0.75\textwidth]{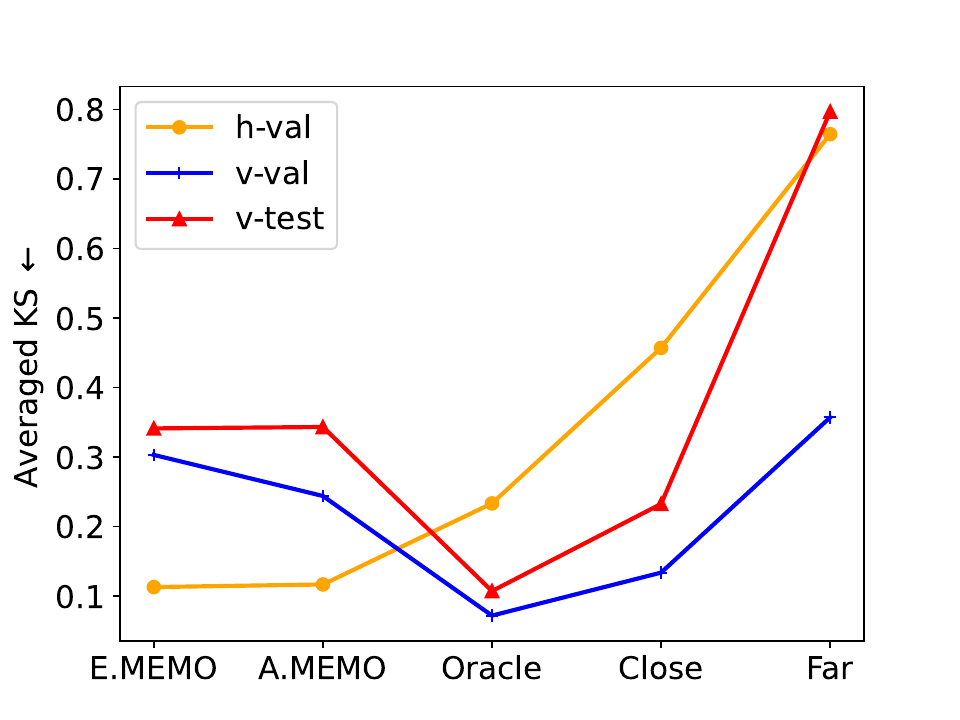}\vspace{-1em}
        \caption{Result for Erdos-Renyi graphs}
        
    \end{subfigure}
    \begin{subfigure}[t]{\columnwidth}
    \centering
        \includegraphics[width=0.75\textwidth]{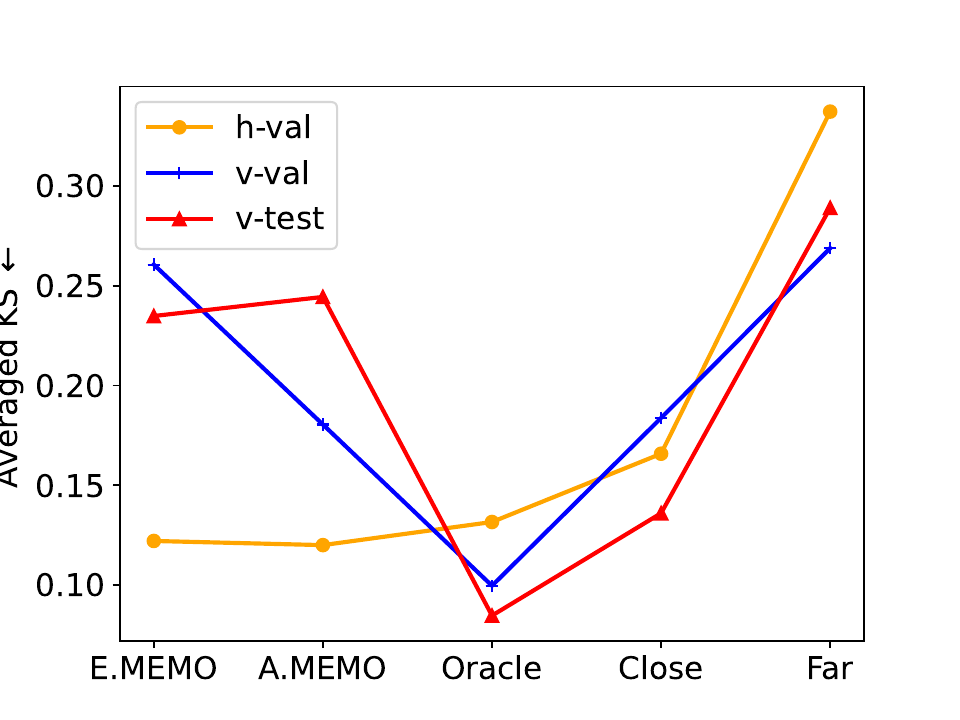}\vspace{-1em}
        \caption{Results for Comm20 graphs}
        \vspace{-1em}
    \end{subfigure}
    \caption{On both Erdos-Renyi and Comm20 datasets, our proposed vertical validation approach (VV) can select the best model for generating in thin support as shown by the test line, whereas the standard train-test splitting (CV) tends to favor memorization despite poor generalization to the thin support regions.
    Note that for CV, the oracle distribution seems worse than memorization because oracle is generating from the true distribution rather than the shifted training distribution---thus it appears to CV that memorizing is actually a better option. This phenomena does not happen in our validation approach because we aim to find a model that generalizes to the thin support well.
    This also showcases that our approach is better able to detect memorization than the standard train-test split validation.
    }
    \label{fig:synthetic-vertical-validation}

\end{figure*}

\subsection{Comparing Models with \name}
\subsubsection{Using \name in a Train-Val-Test context}
\label{sec:vtrain-val-vtest}
Building on the empirical validation of our metric presented in the previous section, we now proceed to compare real graph generative models on realistic datasets using our \name method. We have implemented our approach in a manner consistent with the validation experiments described in \autoref{sec:exp_val}. However, in this section, the different models correspond to various representative graph generative models.

\paragraph{Models and Datasets} We select two representative graph generative models for comparison: DiGress \citep{Vignac2022DiGressDD}, a discrete diffusion-based model that introduces noise to graphs and then trains a graph transformer to revert the process, and  GDSS \citep{Jo2022ScorebasedGM}, a score-based diffusion model employing stochastic differential equations (SDEs) to generate node, edge attributes, and adjacency matrices jointly.
For this experiment, we choose Qm9, a commonly used molecular dataset of 130,831 small molecules. We selected five properitie (i.e., $m=5$) for Qm9: average degree, molecular weight (Mlwt), Topological Polar Surface Area (TPSA) \citep{Prasanna2009TopologicalPS}, ring counts, and the logarithm of the partition coefficient (logp).
We preprocessed the dataset by removing hydrogen atoms and filtering out molecules where any of the five properties could not be calculated using the rdkit package.
Additional experimental details are outlined in \autoref{sec:add_exp}.

\paragraph{Model Comparison Experiment}
We aim at judging the performance of different generative models by their ability to generalize. To accomplish this, we use TPSA as the split property (which corresponds to index $\ell=3$) with parameters $\sharpness = 10$, $\epsilon = 0.01$, $k = 5$ and hold out the split corresponding to $j = 5$ as the v-test portion.
Then, we further split the data with $k = 4$ and hold out the split corresponding to $j = 4$ as the v-val portion. The remaining splits $j = 1,2,3$ correspond to v-train and are used for training the model.  We then evaluate the performance of these models with respect to v-val and v-test for each model type by generating samples from the trained models such that the effective number of samples is  
$n_{\text{eff}} = \min(1000,|\Xtest^{(\ell,j)}|)$. Finally, we calculate the $\phi_{KS}$ scores on the five properties of interest excluding the cases when the test property is the same as the split property ($\ell' \neq \ell$).

We see in \autoref{tab:compare_models_qm9_2} that our VV method using v-val is able to correctly select the model which will perform best on thin support, i.e., perform the best on the held-out v-test.
In most cases, GDSS is better on both v-val and v-test, but in the case of LogP, DiGress is better on both v-val and v-test.

Both models seem to struggle when it comes to molecular weight suggesting the inherent difficulty of generalizing over that property. 
This result showcases that using VV can properly select between model classes when generalization on thin support is desired---and this selection may depend on the test property.

\begin{table}[t]

\centering
\caption{$\phi_{KS}$ values for different test properties for Qm9 when compared against v-val and v-test where $\phideg$, $\phimwt$, $\phirc$, $\philogp$, $\phiavg$ are average degree, molecular weight, average ring counts, average logP, and the total average over all these properties respectively}
\resizebox{\linewidth}{!}{
\label{tab:compare_models_qm9_2}
\vspace{-1em}
\begin{tabular}{p{0.04\linewidth}|p{0.14\linewidth}|p{0.14\linewidth}p{0.14\linewidth}p{0.14\linewidth}p{0.14\linewidth}|p{0.14\linewidth}}
    
     & Model & $\phideg$ & $\phimwt$ & $\phirc$ & $\philogp$ & $\phiavg$ \\
     \hline \\[-2ex]
     
     \multirow{3}{*}{\rotatebox[origin=c]{90}{\parbox[c]{0cm}{\centering \normalsize{vval}}}}  &DiGress& 0.206 & 0.531 & 0.206 & \textbf{0.048} & 0.248 \\
     &GDSS& \textbf{0.053} & \textbf{0.343} & \textbf{0.053} & 0.083 & \textbf{0.133} \\
     \hline  \\[-2ex]
          \multirow{3}{*}{\rotatebox[origin=c]{90}{\parbox[c]{0cm}{\centering \normalsize{vtest}}}}  &DiGress& 0.216 & 0.568 & 0.204 & \textbf{0.109}& 0.274 \\
     &GDSS& \textbf{0.058} & \textbf{0.444} & \textbf{0.058} & 0.123 & \textbf{0.171}\\
    
\end{tabular}
}

\end{table}

\vspace{-0.8em}
\paragraph{Exploratory Visualizations:} To explore the properties of our metric better, we used the samples generated from the experiment above, sorted them in descending order according to the weights assigned by our approach, then filtered for validity and novelty to get the top weighted 100 molecules. We then visualized the top four of these molecules generated by DiGress when testing against v-test in \autoref{fig:digress_vtest} and visualized the rest of the top generated molecules in \autoref{sec:add_exp}. Furthermore, in \autoref{fig:dist_val_test}, we indicate the value of the split property (TPSA) of those 4 molecules and their location with respect to the entire distribution of the TPSA property. As expected, the samples with the higher weights tend to be from the region that the data was held from. 

\begin{figure}[!ht]
    \centering
     \includegraphics[width=.7\linewidth]{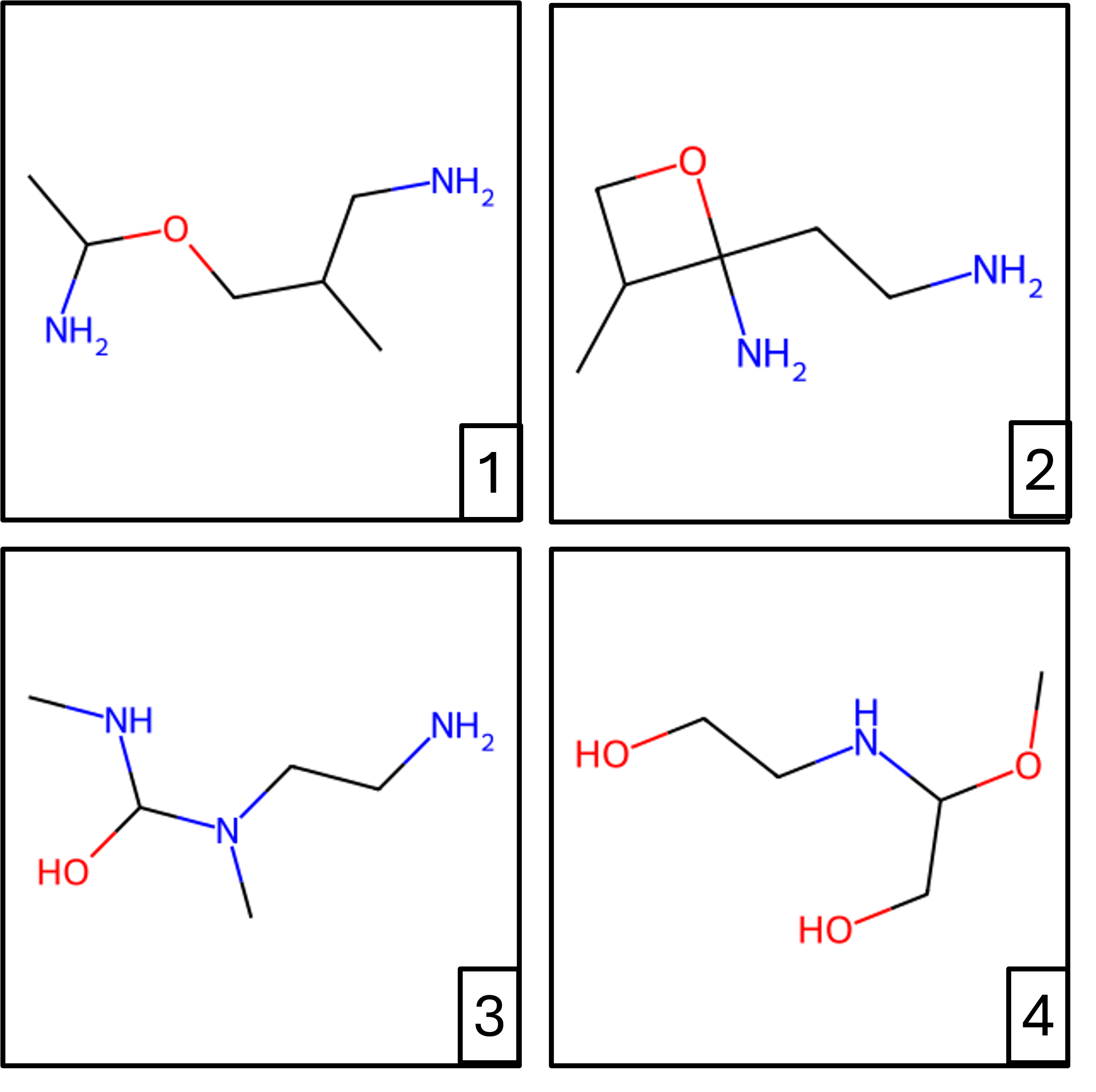}  
    \caption{ Example of the generated molecules from DiGress. These are the top 4 -after filtering for validity and novelty- according to the weights assigned by our method when using v-test as the held out portion. 
    }
    \label{fig:digress_vtest}
    \vspace{-0.5em}
\end{figure}

\begin{figure*}[!ht]
    \centering
    \begin{subfigure}[t]{0.8\columnwidth}
    \centering
        \includegraphics[width=\textwidth]{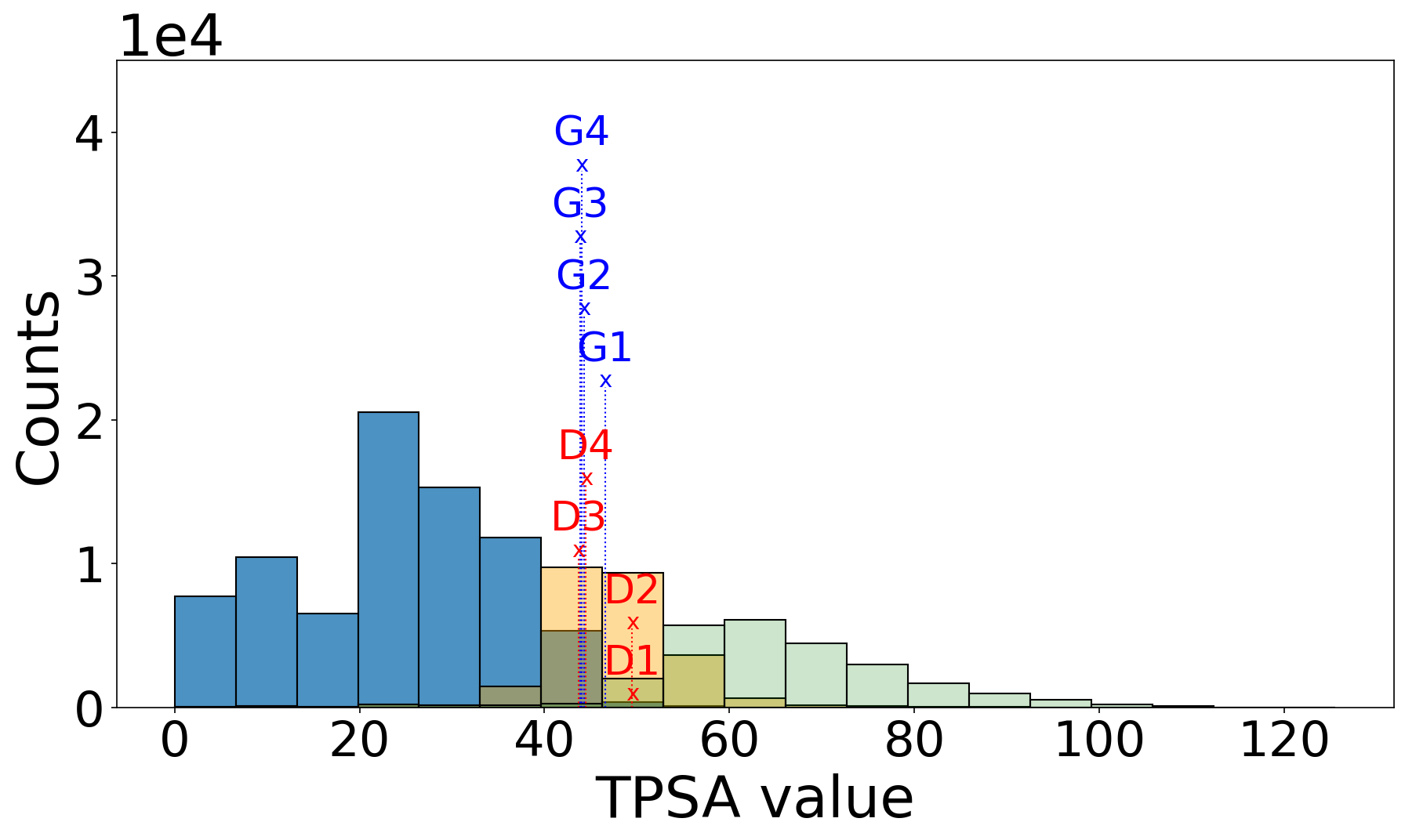}\end{subfigure}
    \raisebox{0.5cm}{ 
    \begin{subfigure}[t]{0.4\columnwidth}
    \centering
        \includegraphics[width=\textwidth]{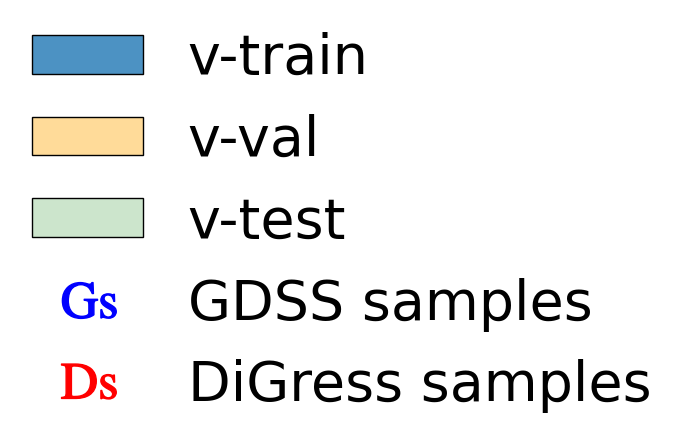}\end{subfigure}
    }
    \begin{subfigure}[t]{0.8\columnwidth}
    \centering
        \includegraphics[width=\textwidth]{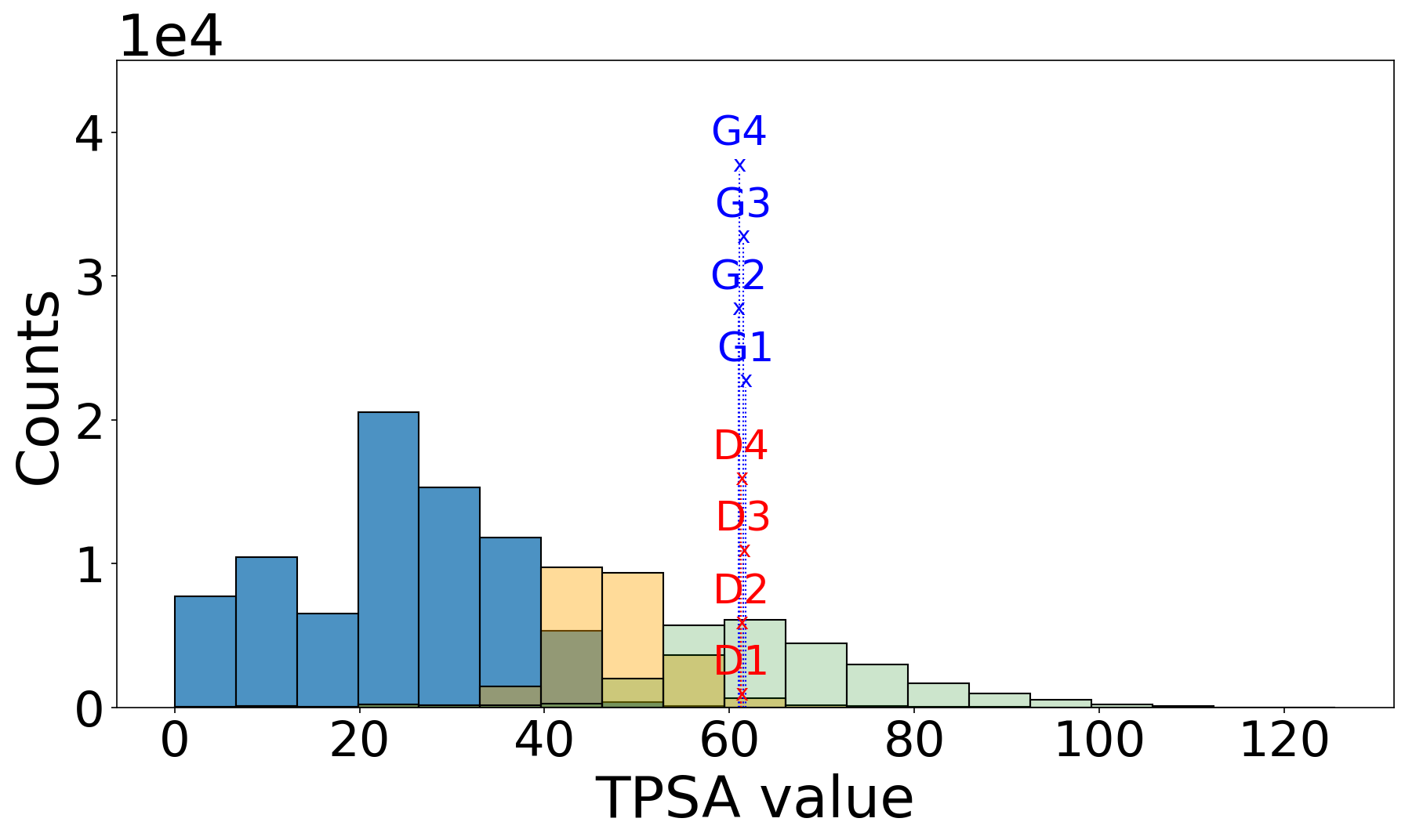}\end{subfigure}
    \caption{Both figures show the distribution of the TPSA property for v-train/v-val/v-test. As expected the top four highest weighted molecules after filtering for validity and novelty are in high density regions of v-val (left) and v-test (right). The distribution of v-train is in blue and is overlayed with the distribution of the v-val portion in yellow and v-test in green. We plotted where would our top molecules lie with respect to their calculated TPSA value (the value of the y-axis is not meaningful for the molecules, we varied it across the molecule for ease of visualization). We use the first letter to signify which model generated the sample D for \textcolor{red}{DiGress} and G  for \textcolor{blue}{GDSS}.
    }
    \label{fig:dist_val_test}
    \vspace{-1.2em}
\end{figure*}

\subsubsection{Using \name in a Cross-Validation-Like Context}
\label{sec:scv}

In this section, we choose to train exhaustively on all the splits from the corresponding split properties and split indices. This approach is similar in spirit to cross-validation, but rather than having $k$ folds only, we will have $k\times m$ folds, corresponding to each split index $j \in {1,.., k}$ and split feature $\ell \in {1,...,m}$. For our experiments, we chose $k=4$ and $m=5$ resulting in a total of 20 different data splits. We trained each model on our splits separately and generated samples from these trained models such that the effective number of samples is 
$n_{\text{eff}} = min(1000,|\Xtest^{(\ell,j)}|$). Finally, we evaluated the performance of these model on the 5 properties of interest excluding the cases when the test property is the same as the split property ($\ell' \neq \ell$).

In addition to the representative models mentioned in \autoref{sec:vtrain-val-vtest}, we also include an additional model GGAN \citep{krawczuk2021gggan}, which is a GAN-based model that employs adversarial training to generate graphs. This model is suitable only for non molecular datasets as node properties are not easily incorporated. Thus we choose to evaluate all 3 models (DiGress, GDSS and GGAN) on a variation of the previously used community dataset which we descibe in more details in \autoref{sec:add_exp} and refer to as Comm dataset, and we choose to evaluate DiGress and GDSS on Qm9 again in this current context. We elaborate more on the results of Comm dataset below, and also present the results on Qm9 in \autoref{tab:compare_models_qm9} for completeness. For the results on the Comm dataset, We aggregate according to different use cases depending on the user's interest.

\emph{Use Case 1}: The user seeks the best model for overall performance across all predefined $m$ properties: we calculate the average $\phi_{KS}$ across all combinations of $\ell$, $j$, and $\ell'$, with $\ell \neq \ell'$, and we report a single averaged value per model. Lower values indicating better generalization capabilities. To demonstrate this case, we report the overall average performance for: DiGress: $0.567$ GDSS: $0.51$  and GGAN: $0.17$. From these numbers, it seems that GGAN performed the best, followed by GDSS, with DiGress falling slightly behind. 
It can also be useful to consider more specific use cases rather than this broad one.

\emph{Use Case 2}: The user aims to identify a model that excels in generalizing over a specific property of interest: we can compute separate average $\phi_{KS}$ values for each test property $\ell'$ by averaging across all ($\ell$,$j$) splits s.t $\ell' \neq \ell$. This will enable the user to make decisions based on the performance of their specific property of interest. We report the results of this use case in \autoref{tab:compare_models}. Based on the results, GGAN generally outperforms the other models. GDSS occupies middle ground, DiGress consistently falls behind. To further understand this behaviour, We examined the ECDFs of the generated properties for the models, and compared them to the ECDFs of the held-out data (see \autoref{sec:add_exp} for a list of Figures), and noticed an overall trend where the original data (held-out) tend to have pronounced discontinuities, a characteristic which GGAN tend to replicate with fewer modes. In contrast, models such as DiGress and GDSS demonstrate smoother distributions. Additionally, it appears that average shortest path length is the most challenging property for GDSS and DiGress models to capture and generalize correctly. Through examining the ECDFs we conjecture that GDSS and DiGress are not able to concentrate the generations near the middle of the distribution. Given their reliance on diffusion-based mechanisms, this observation could imply that while these models may perform adequately at higher noise levels, they exhibit diminished precision at lower noise levels.

\begin{table}[t]

\centering
\caption{Average $\phi_{KS}$ values for different test properties in Use Cases 2 and 3: $\phideg$, $\phitriads$, $\phishort$, $\phiclus$, $\phiclique$ are average degree, average number of triads, average shortest path length, average clustering coefficient, average maximal cliques, respectively}
\resizebox{\linewidth}{!}{
\label{tab:compare_models}

\begin{tabular}{p{0.1\linewidth}|p{0.16\linewidth}|p{0.1\linewidth}p{0.1\linewidth}p{0.1\linewidth}p{0.1\linewidth}p{0.1\linewidth}}
    
     & Model & $\phideg$ & $\phitriads$ & $\phishort$ & $\phiclus$ & $\phiclique$ \\
     \hline \\[-2ex]
     
     \multirow{3}{*}{\rotatebox[origin=c]{90}{\parbox[c]{1cm}{\centering \footnotesize{Use Case 2}}}}  &DiGress & 0.678 & 0.59 & 0.779 & 0.493 & 0.333  \\
     &GDSS & 0.546 & 0.511 & 0.727 & 0.484 & 0.285  \\
     &GGAN & \textbf{0.334} & \textbf{0.108} & \textbf{0.188} & \textbf{0.165} & \textbf{0.057} \\
     \hline \\[-2ex]
     \multirow{3}{*}{\rotatebox[origin=c]{90}{\parbox[c]{1cm}{\centering \footnotesize{Use Case 3}}}} &DiGress & 0.688 & 0.601 & 0.783 & 0.437 & 0.294  \\
     &GDSS & 0.516 & 0.47 & 0.709 & 0.492 & 0.293  \\
     &GGAN & \textbf{0.376} & \textbf{0.06} & \textbf{0.155} & \textbf{0.167} & \textbf{0.044}  \\
\end{tabular}
}
\vspace{-1.8em}
\end{table}

\emph{Use Case 3}: The user seeks a model that generalizes well on the edges of the distribution for a particular property: we can compute the average $\phi_{KS}$ of test properties $\ell'$ across all combinations of $\ell$ values (with $\ell' \neq \ell$) and for only $j = 1$ and $j = 4$ (since these particular splits are focused on the edges of the distribution, as illustrated in \autoref{fig:scv-illustration}). The results are presented in \autoref{tab:compare_models} and are also consistent with Use Case 2, and they reveal distinct strengths and weaknesses in capturing various properties. GGAN consistently exhibits strong performance across most of the properties, while GDSS remains at second place. DiGress, although performing slightly better than GDSS in modeling average clustering coefficients and having a similar performance to GDSS in modeling maximal cliques, tends to rank lower overall. 

For Qm9 dataset, the scores for use case 1 for DiGress were: $0.174$, and for GDSS were: $0.096$. The results of use cases 2 and 3 are presented in \autoref{tab:compare_models_qm9}. Overall the results follow a similar trajectory to these of the Comm dataset in this setting, that is the performance of GDSS and DiGress were close, but GDSS overall achieves a better score. It is also worth comparing the results previously introduced in \autoref{sec:vtrain-val-vtest} in \autoref{tab:compare_models_qm9_2} (and in particular those under v-val) to the current results of use case 2. We see that overall the trend didn't change, however the scores on the Mlwt property got better in the later suggesting that the particular split (i.e. the combination of the choice of split property $\ell$ and split index $j$ ) chosen in \autoref{sec:vtrain-val-vtest} was a particularly hard one, as averaging over multiple splits made the scores better. This again emphasize the role of choosing the split features, and we discuss this point in more details in \autoref{sec:add_exp} and in \autoref{sec:disscussion}  

\begin{table}[H]

\centering
\caption{Average $\phi_{KS}$ values for different test properties for Use Cases 2 and 3: $\phideg$, $\phimwt$, $\phitpsa$, $\phirc$, $\philogp$ are average degree, molecular weight, TPSA, average Ring Counts, and logp}

\resizebox{\linewidth}{!}{
\label{tab:compare_models_qm9}

\begin{tabular}{p{0.1\linewidth}|p{0.16\linewidth}|p{0.1\linewidth}p{0.1\linewidth}p{0.1\linewidth}p{0.1\linewidth}p{0.1\linewidth}}
    
     & Model & $\phideg$ & $\phimwt$ & $\phitpsa$ & $\phirc$ & $\philogp$ \\
     \hline \\[-2ex]
     
     \multirow{2}{*}{\rotatebox[origin=c]{90}{\parbox[c]{0.75cm}{\centering \small{Use Case 2}}}}  &DiGress & 0.136 & 0.368 & 0.082 & 0.128 & 0.156 \\
     &GDSS & \textbf{0.088} & \textbf{0.171} & \textbf{0.080} & \textbf{0.080} & \textbf{0.059} \\
     \hline \\[-2ex]
     \multirow{2}{*}{\rotatebox[origin=c]{90}{\parbox[c]{0.75cm}{\centering \small{Use Case 3}}}} &DiGress & 0.113 & 0.347 & 0.067 & 0.102 & 0.154 \\
     &GDSS & \textbf{0.100} & \textbf{0.173} & \textbf{0.067} & \textbf{0.086} & \textbf{0.070} \\
  
\end{tabular}
}
\end{table}

\section{Discussion and Conclusion} 
\label{sec:disscussion}

\paragraph{Generating Molecules on Thin Support Regions} 
In practice, novel molecule generation may focus on generating molecules within the thick support (rather than thin support) of the \emph{marginal} property distributions because those molecules would be most similar to known molecules. However, we argue that evaluating generation on thin support is still important because thin support regions in the \emph{joint} distribution could be hidden in the thick support of \emph{marginal} distributions, especially when considering a high-dimensional distributions. 
For example, consider samples on a 3D sphere. When projected onto any of the three dimensions, it will look like the support is dense near zero.
However, the distribution has no support at or near the all zero vector. 
Thus, we hypothesize that in high dimensional spaces, there are many thin support regions that are hidden. 
When we systematically create thin support regions using our approach, the goal is to measure the model's ability to generalize to thin support in general (including thin support of the \emph{joint} distribution). Thus, while in practice novel molecule generation may focus on generating molecules with the thick support of the marginal property distributions, we test the ability of the model to generate in those regions as this will reflect its ability to generate in thin support of the \emph{joint} distribution.

\paragraph{Limitations} 
Our approach when used exhaustively as presented in the experiments of \autoref{sec:scv} can be computationally burdensome, however practically we would choose only a single split feature and a single split index as we did in \autoref{sec:vtrain-val-vtest} and this would avoid the added computational cost. 
While we recommend choosing a split feature that is maximally dependent on other features as we discuss in \autoref{sec:add_exp}, choosing the split property is still an area of potential optimization. Additionally, because our method depends heavily on the joint distribution of the chosen properties, we recommend that the user pre-examine the property distributions and carefully select relevant properties, where properties with smooth distributions will likely be better for evaluation. Also, our method is limited to 1-dimensional properties. Generalizing our method to multivariate splits is an area for future work. Finally, we note that estimating sample weights is complex and while our kernel mean matching (KMM) approach worked reasonably well in our case, choosing the kernel parameters or using more advanced weight estimation approaches is an open area of exploration (more discussion in \autoref{sec:implementation-other}).
Therefore, we hope our work opens up new avenues of research

\paragraph{Conclusion} In summary, we introduced \fullname, a new framework for biased splitting and reweighting, to evaluate the generalizability of  implicit  graph generative models on thin support regions. We developed a practical algorithm to perform this given a set of graph properties. We demonstrated that this validation approach can be used to select models which will generalize better to thin support regions. Ultimately, we hope that our approach is a step in establishing more concrete and robust evaluation methodologies for  graph generative models. 

\vspace{-0.6em}
\section*{Acknowledgements}
\vspace{-0.6em}
M.E. and D.I. acknowledge support from ARL (W911NF-2020-221).
This work was funded in part by the National Science Foundation (NSF) awards, CCF-1918483, CAREER IIS-1943364 and CNS-2212160, Amazon Research Award, AnalytiXIN, and the Wabash Heartland Innovation Network (WHIN), Ford, NVidia, CISCO, and Amazon. Any opinions, findings and conclusions or recommendations expressed in this material are those of the authors and do not necessarily reflect the views of the sponsors.

\clearpage
\bibliographystyle{unsrtnat} \bibliography{references}

\clearpage
\clearpage
\appendix
\onecolumn

\renewcommand \thepart{}
\renewcommand \partname{}
\doparttoc \faketableofcontents 

\addcontentsline{toc}{section}{Appendix} \part{Appendix}

\parttoc 

\section{More Details on the Problems of Mean Based Methods}
\label{sec:appA}
We illustrate the problem of man based methods in \autoref{fig:metric-illustration} where we show that using cross-validation with the difference in means or the more complex Wasserstein-1 distance\footnote{Wasserstein-1 distance is also an integral probability metric like MMD but uses a different class of functions in the optimization problem. Wasserstein-1 was chosen for this illustration because it can be computed efficiently and has no hyperparameters.} does not provide useful signal for selecting a model whereas negative log-likelihood provides a strong signal.
In summary, prior evaluation approaches for implicit generative models are limited in their ability to measure performance on thin support regions.
In \autoref{fig:metric-illustration}, we also show that our \name approach provides reliable signal for selecting the correct model in this toy example.

\begin{figure}[!ht]
    \centering
    \includegraphics[width=0.9\columnwidth]{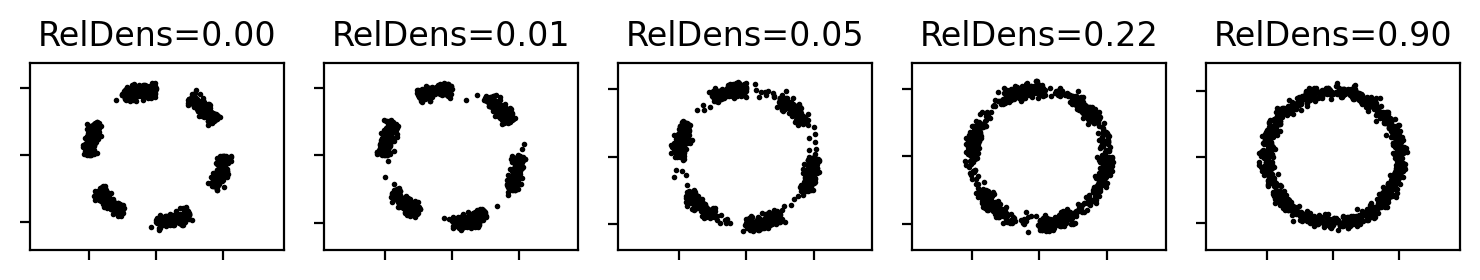}
    \includegraphics[width=0.9\columnwidth]{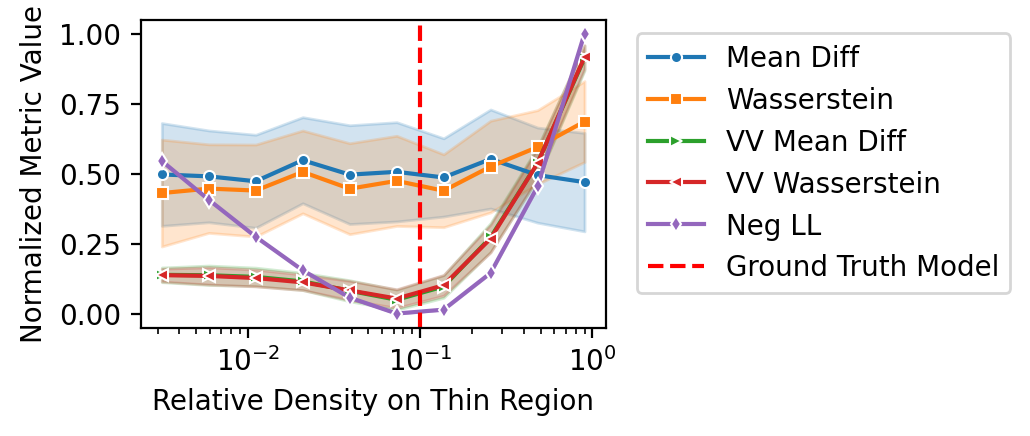}
    \caption{
    While using standard cross-validation with mean difference or Wasserstein-1 metrics does not provide reliable signal to select the right model, our vertical validation (VV) method with mean difference or Wasserstein-1 provides reliable information on the best model and matches the ranking of the negative log-likelihood.
    This illustration uses the 2D dataset from \autoref{fig:shifted-split-illustration} as ground truth, and the relative density of the ``thin region'' is varied to represent different estimated models (top).
    For both standard validation and vertical validation, we use 10 folds and 30 repetitions and show the standard deviation for each method.
    }
    \label{fig:metric-illustration}
\end{figure}
\section{Detailed Illustration of VV's Reweighing Process}
\label{sec:AppB}
In the main paper in \autoref{fig:scv-illustration}, we presented a figure that illustrates the splitting steps of our approach, and here in 
\autoref{fig:scv-illustration-3} we illustrate the complete reweighting process of our \name approach. In this illustration, we focus on a particular split and split feature (for $j = 1 $ and $\ell = 2$) and where the total number of properties is $m = 2$.

\begin{figure*}[htp]
\vspace{-1em}
\centering
\centering
\includegraphics[page=9,width=\textwidth, trim=0.8cm 0.5cm 1.5cm 0.5cm, clip]{figures_graph.pdf}
\vspace{-.5em}
\caption{
(1) using $\Xtrain^{(\ell,j)}$ we train a generative model which then generates $\Xgen^{(\ell,j)}$. For the generated dataset we can compute the properties of interest ($\bar{Z_1}$ and $\bar{Z_2}$). (2) Use $\Xtest^{(\ell,j)}$ to calculate the weights $W^{(\ell,j)}$ which will then be used in (3) to re-weight the samples in $\Xgen^{(\ell,j)}$ by applying those weights to the calculated properties. The Density figures above show the difference in distribution between the re-weighted and un-weighted versions of the generated data as well as how they compare to the held-out part of the data. (4) Finally we can compute the KS statistic between the weighted $\Xgen^{(\ell,j)}$ and $\Xtest^{(\ell,j)}$ as detailed in \autoref{eqn:phi_ks}.}
\label{fig:scv-illustration-3}
\vspace{-1.5em}
\end{figure*}

\section{Illustration of Beta Splits}
\label{sec:illustrate_beta_splits}
In section \ref{sec:beta_splits} we presented the Beta Splits, and here we add more figures that illustrate how Beta-based splitting works with different parameters. \Cref{fig:beta_sharp1} shows an example for $k = 4$ beta distributions (and $\sharpness = 1$) and \Cref{fig:beta_sharp10} shows the same example but with sharpness of 10, i.e., $\sharpness=10$. In \Cref{fig:eps}, we show an illustration of using the uniform mixing parameter $\epsilon$ to ensure some support on the whole range of the uniform.

\begin{figure}[H]
  \centering
  \includegraphics[width=\linewidth]{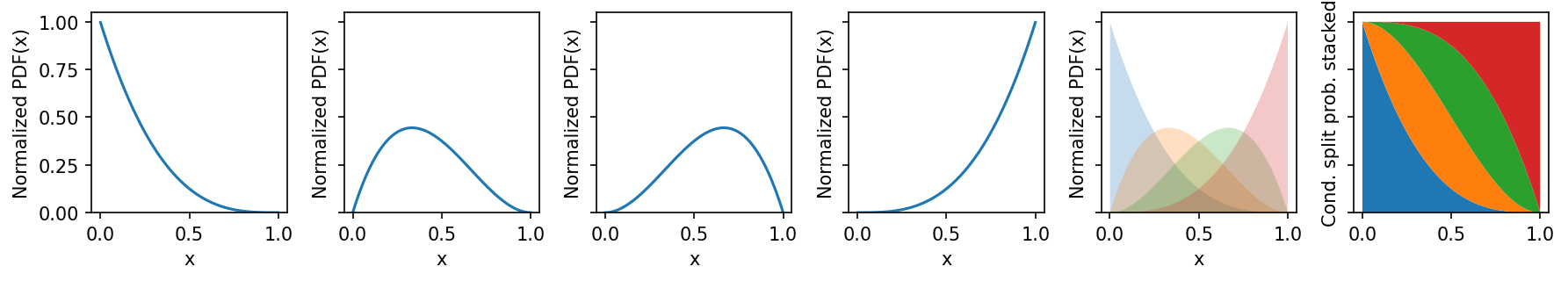}
  \caption{First 4 figures are the normalized PDF for 4 Beta distributions parameterized by a set of shifting parameters. The fifth figure is the aggregation of the 4 figures to the left, and the right most figure is the conditional probabilities of a projected point falling in a split based on the previously described Beta distributions where each split have a different color. }
  \label{fig:beta_sharp1}
\end{figure} 

\begin{figure}[H]
  \centering
  \includegraphics[width=\linewidth]{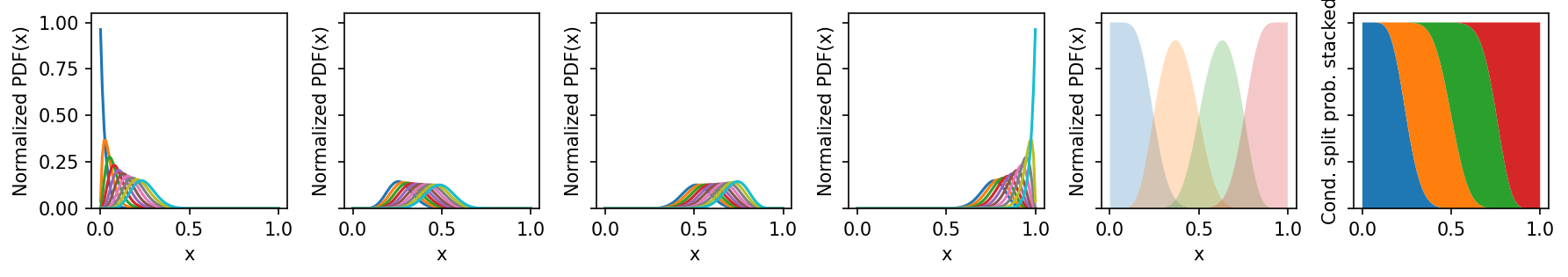}
  \caption{The same plot as \autoref{fig:beta_sharp1} except where the sharpness is 10, i.e. $\sharpness = 10$. We notice now that the conditional probabilities computed have sharper edges as seen in the rightmost figure.}
  \label{fig:beta_sharp10}
\end{figure}

\begin{figure}[H]
\vspace{-1em}
  \centering
  \includegraphics[width=0.9\linewidth]{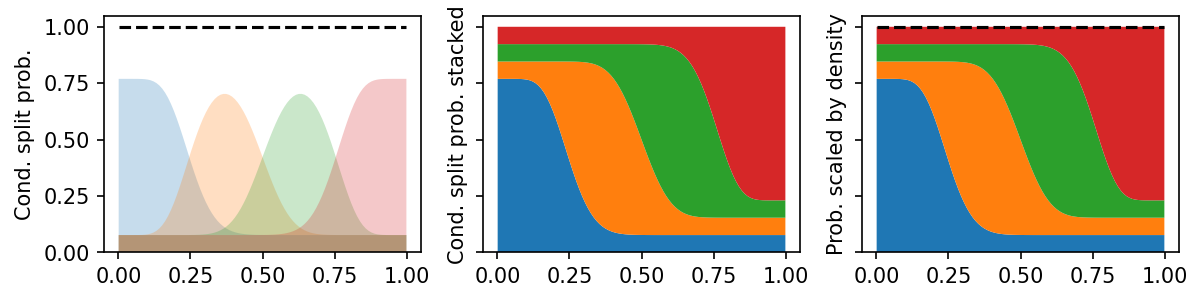}
  \caption{Leftmost is the mixture of betas for the different splits along with the uniform $\epsilon = 0.1$ (and using $\sharpness = 10$). Middle is the stacked probabilities. Rightmost are the probabilities normalized by the density.}
  \label{fig:eps}

\end{figure}
\section{Illustration of the Splitting Approach Described in section 4.1}
\label{sec:splitting}
\begin{figure}[!ht]
    \centering
    \includegraphics[width=0.8\columnwidth]{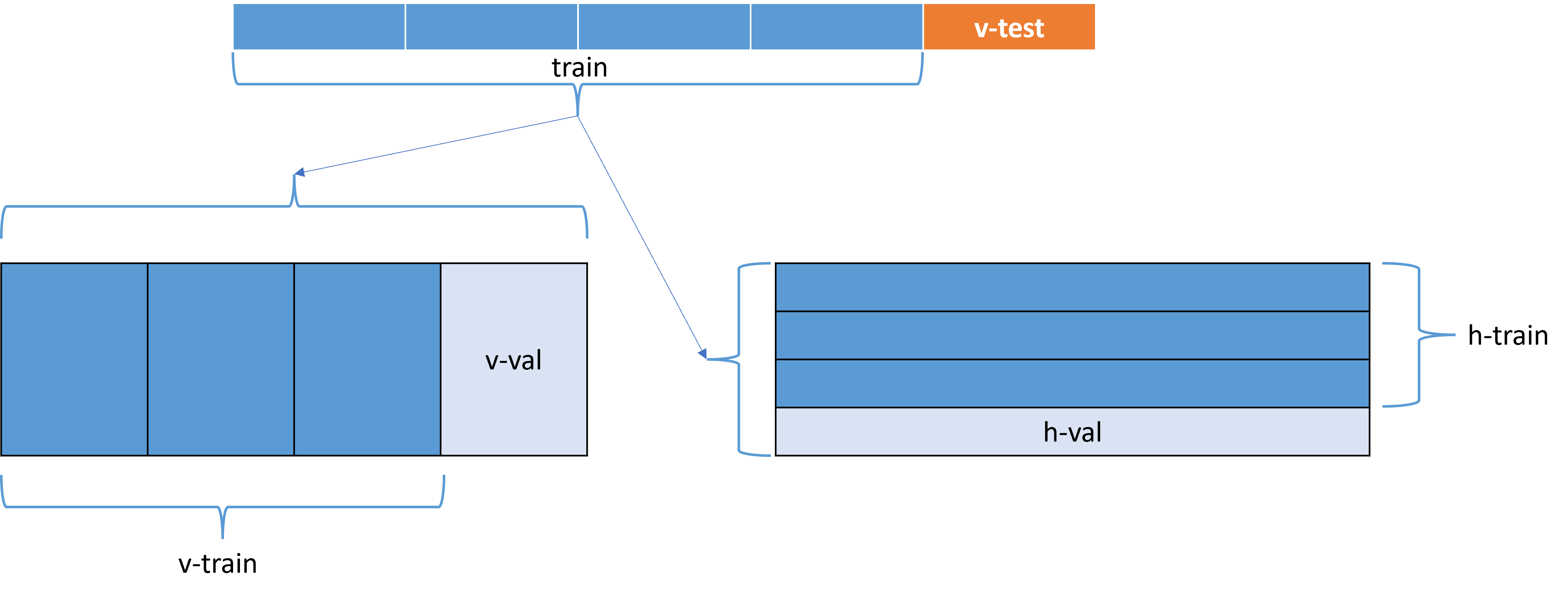}
    \caption{
    An illustration demonstrating the created splits described in the main section
    }
    \label{fig:splitting}
\end{figure}

\section{Implementation Details of \name}
\label{sec:implementation-other}
We discuss a few implementation details here including how to project samples to the unit interval via the empirical CDF, how to estimate importance weights via kernel mean matching, and how to generate enough samples from the model (as they depend on the importance weights).

\subsection{A Generalization of the Empirical CDF}
\label{sec:ecdf}
To project samples onto the unit interval, we use a generalization of the empirical CDF (ECDF) where the test points may be different from the training points and where ties (e.g., in discrete spaces) can be handled appropriately.
The core idea is that for test points we want to project, denoted by $\mathcal{A}$, we find the nearest point in the base dataset ($\mathcal{B}$) and evenly spread out the projections corresponding to the ECDF interval. We begin with an illustration to explain this in both continuous and discrete cases in \Cref{fig:cont_ecdf} and \Cref{fig:discrete_ecdf} respectively.
To define it formally, we first begin with a simpler randomized uniform projection method and then develop the non-randomized version that yields a deterministic low discrepancy sequence, also known as a \emph{centered regular lattice} \citep[Chapter 1]{dick2010digital}.

\begin{figure}[H]
\centering
\includegraphics[width=.48\textwidth]{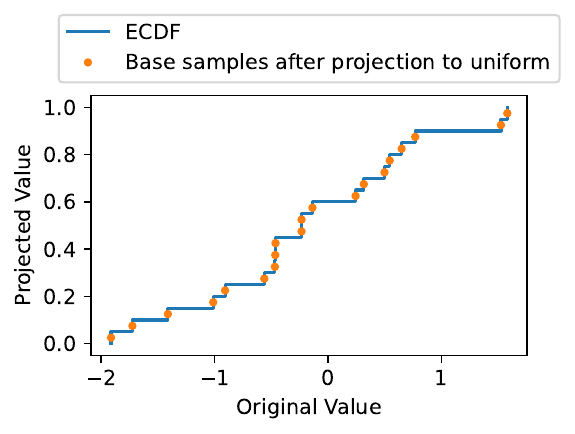}
\includegraphics[width=.5\textwidth]{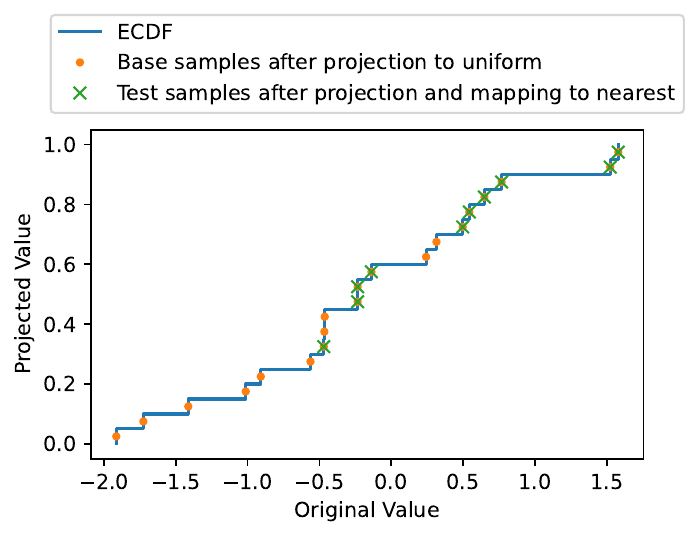}
\caption{On the left in orange are continuous valued  base samples of dataset $\mathcal{B}$ after being projected onto the uniform space using the ECDF. On the right in green are the continuous valued test samples of dataset $\mathcal{A}$ after being projected onto the uniform space using the nearest point in the base dataset $\mathcal{B}$ }
\label{fig:cont_ecdf}
\end{figure}

\begin{figure}[H]
\centering
\includegraphics[width=.45\textwidth]{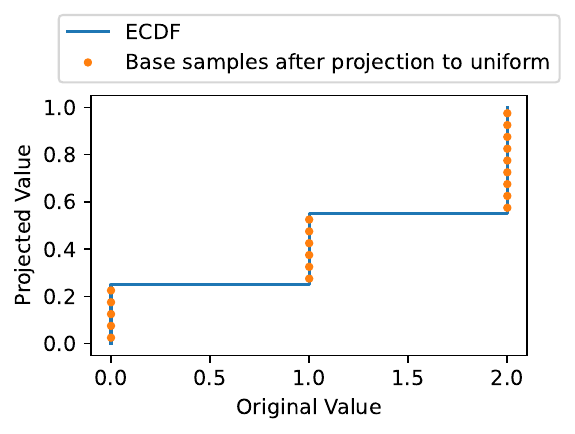}
\includegraphics[width=.45\textwidth]{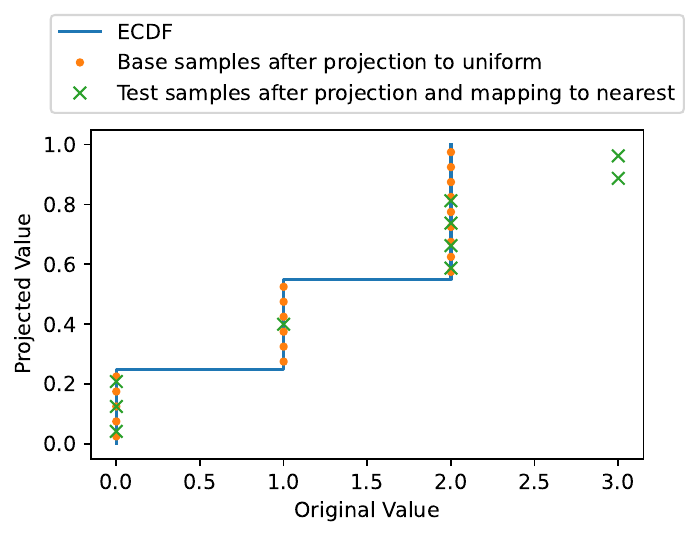}

\caption{On the left in orange are discrete valued  base samples of dataset $\mathcal{B}$ after being projected onto the uniform space using the ECDF and breaking ties between the same points at random. On the right in green are the discrete valued test samples of dataset $\mathcal{A}$ after being projected onto the uniform space using the nearest point in the base dataset $\mathcal{B}$. 
Note that if the test samples are outside the original data (as the two test points at 3.0), they will be mapped to the nearest base point (2.0 in this figure) and then projected evenly onto the interval spanning the corresponding part of the ECDF.
}
\label{fig:discrete_ecdf}
\end{figure}

\paragraph{Preliminary Notation.}
First, let us define the left and right empirical CDFs (the standard ECDF is the right empirical CDF).
Informally, these will give us the bottom and top of each ``step'' in the empirical CDF.
Formally, given a 1D dataset $\mathcal{C}$ as:
\begin{align}
    \hat{F}_{\mathcal{C}}^{(-)}(a) 
    &= \frac{1}{|\mathcal{C}|}\sum_{c_i \in \mathcal{C}} \mathbf{1}(c_i < a)  \\
    \hat{F}_{\mathcal{C}}^{(+)}(a) 
    &= \frac{1}{|\mathcal{C}|} \sum_{c_i \in \mathcal{C}} \mathbf{1}(c_i \leq a) \,,
\end{align}
where $\mathbf{1}(\cdot)$ is an indicator function that is 1 if the condition is true and 0 otherwise.
Note that the only difference between these is that in the left ECDF $\hat{F}_{\mathcal{C}}^{(-)}$, the condition does not include equality.

\paragraph{Na\"ive Randomized Projection to Unit Interval.}
First, we will consider a randomized projection to the uniform based on the left and right ECDFs defined above.
Formally, with two scalar datasets, $\mathcal{A} = \{ a_i \in \R \}_{i=1}^{|\mathcal{A}|}$ and a base dataset $\mathcal{B} = \{ b_i \in \R \}_{i=1}^{|\mathcal{B}|}$, we define our function as: \begin{align}
    F_{\text{rand}}(\mathcal{A} ; \mathcal{B}) &\triangleq \{(1-V_i) \hat{F}_{\mathcal{B}}^{(-)}(b^*(a_i)) + V_i \hat{F}_{\mathcal{B}}^{(+)}(b^*(a_i)) : a_i \in \mathcal{A} \}  \,,
\end{align}
where $b^*(a_i) := \argmin_{b \in \mathcal{B}} \|a_i - b\|_2^2$ represents the closest point in $\mathcal{B}$ to $a_i$ and $V_i \sim \text{Uniform}(0,1)$ is an independent uniform random variable corresponding to each value $a_i \in \mathcal{A}$.
This projects each point $a_i$ uniformly over the interval $[\hat{F}_{\mathcal{B}}^{(-)}(b^*(a_i)), \hat{F}_{\mathcal{B}}^{(+)}(b^*(a_i))]$.
In the special case where $\mathcal{A} = \mathcal{B}$, i.e., the test points are the same as the base dataset, then this produces a marginally uniform distribution over $[0,1]$ because each test point is randomly projected on the $1/|\mathcal{B}|$ interval of the ECDF corresponding to itself.
Even if there are ties in the dataset (e.g., with a discrete dataset), this will still produce a uniform marginal distribution because the ECDF will jump by the $n_{\text{ties}}/|\mathcal{B}|$, where $n_{\text{ties}}$ is the number of ties for a particular value in $\mathcal{B}$.

\paragraph{Low Discrepancy Projection to Unit Interval.}
The main drawback to the projection method described above is that it is random and may not evenly space out points across the unit interval [0,1], i.e., the sequence may not have low discrepancy compared to evenly spaced points \citep{dick2010digital}.
Therefore, we propose a non-randomized version of the above projection method that evenly spaces test points across the corresponding interval instead of choosing a random point in the interval, i.e., we replace $V_i$ with a deterministic function.
This form of spacing is known as a regular centered lattice \citep[Chapter 1]{dick2010digital}.
Formally, we define our non-randomized function as: 
\begin{align}
    F(\mathcal{A} ; \mathcal{B}) 
    &:= \{(1-v(a_i)) \hat{F}_{\mathcal{B}}^{(-)}(b^*(a_i)) + v(a_i) \hat{F}_{\mathcal{B}}^{(+)}(b^*(a_i)) : a_i \in \mathcal{A} \} \,,
\end{align}
where $b^*(a_i)$ is the closest point as defined in the randomized version but we replace a random $V_i$ with a determinstic function of the sets defined as follows:
\begin{align}
    v(a_i) &:= \frac{\textnormal{sorted\_index}(a_i, \mathcal{A}(a_i)) - 0.5}{|\mathcal{A}(a_i)|}  \\
    \mathcal{A}(a_i) &:= \{ a_j \in \mathcal{A}: b^*(a_j) = b^*(a_i) \} \,.
\end{align}
where $\mathcal{A}(a_i)$ represents the equivalence class of all points in $\mathcal{A}$ that have the same closest point in $\mathcal{B}$ and $\textnormal{sorted\_index}(a_i, \mathcal{A}(a_i))$ returns the index (with one-indexing) of the element in the multi-set of $\mathcal{A}(a_i)$ after sorting the elements where ties are broken arbitrarily.
Note that $v(a_i) \in [0,1]$ by construction and it evenly spaces the points in the equivalence class of $\mathcal{A}(a_i)$ over the interval corresponding to the equivalence class.
For example, if $a_i=1$ and $\mathcal{A}(a_i) = \{1,0.5,1.5\}$, then $v(a_i) = \frac{\textnormal{index}(a_i,\mathcal{A}(a_i)) -0.5}{|\mathcal{A}(a_i)|} = \frac{2-0.5}{3} = \frac{1.5}{3} = 0.5$, i.e., it would project this point to the center of the interval $[\hat{F}_{\mathcal{B}}^{(-)}(b^*(a_i)), \hat{F}_{\mathcal{B}}^{(+)}(b^*(a_i))]$.

\subsection{Computing the Importance Weights}

We Used Kernel Mean Matching (KMM) \citep{Huang2006CorrectingSS,Yu2012AnalysisOK} implemented in\cite{de2021adapt} to directly estimate the ratio in \autoref{eqn:wts} using the generated and held-out samples. For this approach we used an RBF kernel and experimented with different values of kernel bandwidth to finally set on using $\frac{10}{\sigma(Z_{i,\ell})}$ where $\sigma(Z_{i,\ell})$ is the standard deviation of the held portion of the data.

\subsection{Procedure for Generating Samples from the Model}
Because the samples are re-weighted based on the importance weights, the effective number of samples for statistical tests is less than the number of generated samples.
Therefore, we choose to generate the same number of \emph{effective} samples for each method where the number of effective samples is defined as in the main paper but repeated here for clarity:
\begin{align}
    N_{\text{eff}}(\mathcal{G}_{\sgen,W}^{(\ell, j)})= \frac{(\sum_{(\Gbar_i, W(\Gbar_i)\in \mathcal{G}_{\sgen,W}^{(\ell, j)}} W^{(\ell,j)}(\Gbar_i))^2}{\sum_{(\Gbar_i, W(\Gbar_i)\in \mathcal{G}_{\sgen,W}^{(\ell, j)}} W^{(\ell,j)}(\Gbar_i)^2}.
\end{align}

Specifically, we set a target threshold of the number of effective samples $t=\min(1000, |\Xtest^{(\ell,j)}|)$.
We iteratively generated batches of t samples and checked if the concatenated samples reached the number of effective samples, and we stopped generating once it reached the desired value.

\clearpage

\section{Proofs}
\label{sec:proofs}

\thmone*

\begin{proof}

In the proof below we suppress the dependency on $i$ and $\ell$ when needed for simplicity.
To prove that $P(U_{i,\ell}) =\mathrm{Uniform}[0,1]$ we first prove that this is true for $ \pbetamix (U_{i,\ell} | S_{i,\ell}) $ by marginalizing over the joint distributions of $U_{i,\ell}$ and $S_{i,\ell}$ to get:

\begin{align}
    & \sum_{j = 1}^k \pbetamix (S = j , U) \\
    & = \sum_{j = 1}^k p(S=j) \pbetamix(U|S=j) \\
    & = \sum_{j = 1}^k P(S=j) \frac{1}{\sharpness} \sum_{a = 1}^{\sharpness} p_{\mathrm{Beta}[\alpha_{j,a},\beta_{j,a}]}(U) \\
    & = \frac{1}{\sharpness K} \sum_{j = 1}^k \sum_{a = 1}^{\sharpness} p_{\mathrm{Beta}[\alpha_{j,a},\beta_{j,a}]}(U) \\
    & = \frac{1}{\sharpness K} \sum_{j = 1}^k \sum_{a = 1}^{\sharpness} p_{\mathrm{Beta}[(j-1)\sharpness+a,k\sharpness + 1 - ((j-1)\sharpness + a)]}(U) \\
    \intertext{Let $n = k \sharpness$, $r = (j-1)\sharpness+a$ then we can rewrite the above as:}
    & = \frac{1}{n} \sum_{r = 1}^{n } p_{\mathrm{Beta}[r,n + 1 - r]}(U) 
    \label{eq:use_segers} \\
    & = p_{\mathrm{Unif}[0,1]}(U) \label{eq:lastone}
\end{align}

\autoref{eq:use_segers} is similar to \cite{segers2017empirical} results in section 2.1 if we substitute $d = 1$ and $n = k \sharpness$ in their results, and use that to arrive to the last equality leading to \autoref{eq:lastone}.

Next, we show that this holds true for $p(U|S)$ as follows:
\begin{align}
&\sum_{j=1}^k p(S=j,U) \\
&= \sum_{j = 1}^k p(S=j) p(U|S = j) \\
&= \sum_{j = 1}^k p(S=j) [(1\!-\!\epsilon)\pbetamix(U|S) + \epsilon p_{\mathrm{Unif}[0,1]}(U)]\\
&= \epsilon  p_{\mathrm{Unif}[0,1]}(U) + (1-\epsilon) \sum_{j = 1}^k p(S = j) \pbetamix(U|S = j)\\
&= p_{\mathrm{Unif}[0,1]}(U)\,.
\end{align}

Invoking the previous results, we see that the above is also uniform.

To prove that the splits will be biased, we will use mutual information as follows:
Let $p(U,S)$ denote the joint distribution of $p(U_{i,\ell},S_{i,\ell})$.
The mutual information for $\epsilon < 1$ can be written as:
\begin{align}
    I(U,S) 
    &\equiv \mathrm{KL}(p(U,S), p(U)p(S)) \\
    &= \E_{p(S)}[\E_{p(U|S)}[\log \frac{p(U,S)}{p(U)p(S)}]] \\
    &= \E_{p(S)}[\E_{p(U|S)}[\log \frac{p(U|S)p(S)}{p(U)p(S)}]] \\
    &= \E_{p(S)}[\E_{p(U|S)}[\log \frac{p(U|S)}{p(U)}]] \\
    &= \E_{p(S)}[\mathrm{KL}(p(U|S), p(U))]] \\
    &>0 \,,
\end{align}
where the last inequality is because $p(U)$ is a uniform distribution but $p(U|S=j)$ is not uniform whenever $\epsilon < 1$, i.e., if $\epsilon < 1$, then $p(U|S=j) \neq p(U), \forall j$, and thus the KL must be positive for all terms in the expectation.

\end{proof}

\paragraph{Remark on Mutual Information Between Splits}
We also briefly discuss additional design points of the mutual information between splits and graphs.
If $\epsilon=0$ and $\sharpness = 1$, since $p(U) = p_{\mathrm{Unif}[0,1]}$, we also know that the KL terms are equal to negative differential entropy of the Beta distributions which is known in closed form, i.e.,
\begin{equation}
    \begin{aligned}[b]
    &\mathrm{KL}(p_{\mathrm{Beta}[\alpha,\beta]}, p_{\mathrm{Unif}[0,1]}) \equiv -H(p_{\mathrm{Beta}[\alpha,\beta]}) \\
    &=-[\log \mathrm{B}(\alpha,\beta) - (\alpha-1)\gamma(\alpha)- (\beta-1)\gamma(\beta) + 
    (\alpha+\beta-2)\gamma(\alpha + \beta)] \,,
\end{aligned}
\end{equation}
where $\mathrm{B}(\cdot,\cdot)$ denotes the the Beta function and $\gamma(\cdot)$ denotes the digamma function.

Furthermore, if $0\leq \epsilon \leq 1$ and we consider the term $\mathrm{KL}((1-\epsilon)p_{\mathrm{Beta}[\alpha,\beta]} + \epsilon p_{\mathrm{Unif}[0,1]}, p_{\mathrm{Unif}[0,1]})$ which we will refer to as $\mathrm{KL}_{\epsilon}$, we know that at $\epsilon = 1$ the term becomes $\mathrm{KL}( p_{\mathrm{Unif}[0,1]}, p_{\mathrm{Unif}[0,1]}) = 0$, and on the other extreme at $\epsilon = 0$, it becomes $\mathrm{KL}(p_{\mathrm{Beta}[\alpha,\beta]}, p_{\mathrm{Unif}[0,1]})$ which is known in closed form by the result above. Thus for any $0 < \epsilon < 1$, we are guaranteed to have: $0 < \mathrm{KL}_{\epsilon} < -H(p_{\mathrm{Beta}[\alpha,\beta]})$.

\thmtwo*

\begin{proof}
Let $F$ be the true data CDF we would like to generate, i.e., the true CDF of the graph property we are testing.
 Let $\hat{F}^{(i)}_{n_i}$ be the empirical CDF obtained from the implicit model after $n_i$ samples.\footnote{
 While we prove the theorem statement with respect to an unweighted empirical CDF for the generated samples, we expect a similar result to hold for a weighted empirical CDF where we use the number of effective samples as defined in the main paper in place of $n_i$. }
 Let $\hat{F}^{(h)}_{n_h}$ be the empirical heldout CDF from $n_h$ heldout samples.
 Finally, let $\hat{F}^{(m)}_{n_m}$ be the empirical memorization CDF with $n_m$ memorization samples (we will use $\hat{F}^{(m)}_{n_m}$ rather than $F_m$ in the theorem statement).
By the Dvoretzky-Kiefer-Wolfowitz inequality with Massart's universal constant~\citep{massart1990tight} we have that for an arbitrary empirical distribution $\hat{H}_n$ with $n$ samples and its true distribution $H$, 
\[
 P(\phi_\text{KS}(\hat{H}_n, H) > d) \leq 2 \exp(-2 nd^2), \ \forall d > 0.
\]
Note that $\phi_\text{KS}$ satisfies the triangle inequality, and hence, 
\begin{equation}
\label{eq:triangle}    
\phi_\text{KS}(\hat{F}^{(i)}_{n_i}, \hat{F}^{(h)}_{n_h}) \leq \phi_\text{KS}(\hat{F}^{(i)}_{n_i}, F) + \phi_\text{KS}(F, \hat{F}^{(h)}_{n_h}).
\end{equation}
By the total law of probabilities,  
\[
P(\phi_\text{KS}(\hat{F}^{(i)}_{n_i}, F) < d) \geq 1 - 2 \exp(-2 n_i d^2)
\]
and 
\[
P(\phi_\text{KS}(F, \hat{F}^{(h)}_{n_h}) < d) \geq 1 - 2 \exp(-2 n_h d^2),
\]
which together with \Cref{eq:triangle} yields 
\begin{align*}    
P(\phi_\text{KS}(\hat{F}^{(i)}_{n_i}, \hat{F}^{(h)}_{n_h}) < d) &\geq (1 - 2 \exp(-2 n_i d^2)) (1 - 2 \exp(-2 n_h d^2)) \\
& \geq 1 - 2 \exp(-2 n_i d^2) - 2\exp(-2 n_h d^2) + 4\exp(-2 (n_h + n_i) d^2).
\end{align*}
By the total law of probabilities, we obtain
\begin{align*}    
P(\phi_\text{KS}(\hat{F}^{(i)}_{n_i}, \hat{F}^{(h)}_{n_h}) > d) & \leq 2 \exp(-2 n_i d^2) + 2\exp(-2 n_h d^2) - 4\exp(-2 (n_h + n_i) d^2)\\
& \leq 2 \exp(-2 \min(n_i,n_h) d^2) + 2\exp(-2 \min(n_i,n_h) d^2) \\
&\leq 4 \exp(-2 \min(n_i,n_h) d^2) \\
&\leq 4 \exp(-2 \min(n_i,n_h) (\frac{d}{2})^2) \\
\end{align*}

Now, set $\epsilon \in [0,1]$, then we have: $P(\phi_{\text{KS}}(\Xheldw, \cG_{\textnormal{gen},W}^{(\ell,j)}; h_{\ell'})  >  \epsilon)  \leq 4 \exp\left(- 2  \min(|\Xgen^{(\ell, j)}| , |\Xtest^{(\ell, j)}|) \left( \frac{\epsilon}{2}\right)^2  \right)$

\end{proof}

\section{Additional Experimental Details and Results}
\label{sec:add_exp}
In this section, we provide additional experimental details concerning the datasets, training setup, as well as additional results and figures.
\subsection{More Information About the Datasets}

\paragraph{Community datasets:} We used two variations of this dataset (referred to as Comm20 in \autoref{sec:exp_val} and as Comm in \autoref{sec:scv}) in our experiments. Both variations share the following, they are synthetic graphs made up by exactly two equally sized communities, with number of nodes ranging from $12$ to $20$. Each of the communities are generated by the random graph generator model, E-R model \citep{E-R-Model} with  $p = 0.7$, and the number of edges added between the two communities is with probability of $0.05$ times the number of nodes in the graph.
The variation used in \autoref{sec:exp_val} was generated using the parameters above using the code provided in the git repository of the official GDSS model implementation \footnote{\url{https://github.com/harryjo97/GDSS/blob/master/data/data_generators.py}} after adjusting the seeds in such a way to avoid exactly repeating the same graph, and we generated 500 samples.

The variation used by the experiments in \autoref{sec:scv} was the original dataset available at the same GDSS repository \footnote{\url{https://github.com/harryjo97/GDSS/blob/master/data/community_small.pkl}} which contains 100 graphs. After close inspection of this dataset, we noted that it has multiple repeated graphs, due to running the code with similar seeds more than once. In some sense these replications can be considered a form of leakage when splitting the data, however with our approach since we are doing vertical splitting, identical samples are more likely to belong to only v-train or only v-test. We leave it as a future work to thoroughly understand the effects of such repetitions.

\paragraph{Qm9:} This dataset \footnote{\url{https://deepchemdata.s3-us-west-1.amazonaws.com/datasets/molnet_publish/qm9.zip}} contains 134k drug-like molecules which made up of at most 9 heavy (non Hydrogen) atoms: Carbon (C), Oxygen (O), Nitrogen (N), and Flourine (F), and Hydrogen (H) bonding. 

The dataset we ended up using after some preprocessing and filtering had a total number of 130,831 samples. We arrived at this number by prepossessing the dataset by removing the hydrogen atoms from all the molecules, and removing all the molecules where any of the 5 properties cannot be calculated by rdkit package. The 5 chosen properties for this dataset are average degree ($\phideg$), molecular weight ($\phimwt$), Topological Polar Surface Area (TPSA) ($\phitpsa$) \citep{Prasanna2009TopologicalPS}, ring counts ($\phirc$), and the logarithm of the partition coefficient (logp) ($\philogp$).

\subsection{Training Details for Each Graph Generative Model}

We chose the models: \textbf{DiGress} \citep{Vignac2022DiGressDD}, \textbf{GDSS} \citep{Jo2022ScorebasedGM} and \textbf{GGAN} \citep{krawczuk2021gggan} to evaluate.

\paragraph{DiGress Training.}

for the experiments of \autoref{sec:scv} we trained DiGress on the Comm dataset, we set the number of epochs to be 100,000 with a batch size of 256, a learning rate of 0.0002, and an AdamW optimizer. The model parameters used for training were T = 500 diffusion steps, with a cosine noise schedule and 8 layers. Those were the default parameters provided in the config files in the official code repository\footnote{\url{https://github.com/cvignac/DiGress}}.

For training DiGress with Qm9 dataset, we set the number of epochs to be 1000 with a batch size of 512, a learning rate of 0.0002, a weight decay of $10^{-12}$ and an AdamW optimizer. The model parameters used for training were T = 500 diffusion steps, with a cosine noise schedule and 9 layers.

For the experiments of \autoref{sec:vtrain-val-vtest}, we used the same settings as mentioned above for the respective dataset but trained with early stopping with a patience of 20 and monitoring the Negative log-likelihood (NLL) of the validation data (v-val).

\paragraph{GDSS Training}
The code used for GDSS is adapted from their official repository \footnote{\url{https://github.com/harryjo97/GDSS/tree/master}}. The hyperparameters applied to train the Community and Qm9 datasets are taken from the config files provided by the 
 the author \citep{Jo2022ScorebasedGM}. 
For the Community datasets\footnote{\url{https://github.com/harryjo97/GDSS/blob/master/config/community_small.yaml}}, the number of epochs is $5000$ with a batch size of $128$, a learning rate of $0.01$, with an Adam optimizer and Exponential Moving Average (EMA).
For Qm9 dataset\footnote{\url{https://github.com/harryjo97/GDSS/blob/master/config/qm9.yaml}}, the number of epochs is $300$ with a batch size of $1024$, a learning rate of $0.005$, with an Adam optimizer and Exponential Moving Average (EMA).

For the experiments in \autoref{sec:vtrain-val-vtest}, we incorporated an early stopping criterion. Specifically, training was terminated if the difference between the MMD loss (estimated partial scores, equation 5 in \citet{Jo2022ScorebasedGM}) for the training and validation sets did not decrease below $1e-10$ for 5 consecutive epochs. This prevents overfitting and saves computational resources.

\paragraph{GGAN Training}
GGAN code is taken from SPECTRE \citep{martinkus2022spectre} repository \footnote{\url{https://github.com/KarolisMart/SPECTRE/tree/main}} with \verb|--use_fixed_emb| argument while training. Only Comm dataset is used with this model. The hyper-parameters to train the model are also taken from the suggested commands for Comm in the mentioned repository: the number of epochs is up to $12000$ with a batch size of $10$, a learning rate of $0.0001$, with an Adam optimizer (both discriminator and generator).

\subsection{Choosing the Split Feature} \label{appendix:choose_split}
In \autoref{sec:exp_val} we chose a single split property out of 3 possible properties, and in \autoref{sec:vtrain-val-vtest} we chose one out of 5 possible properties. 
Intuitively, a split property which is completely independent of the other test properties would not cause any (indirect) shift of the other properties.
Thus, the marginal distributions of each property would be equal for every split.
This would not enable good evaluation of the generalization performance on the thin support since a model could just copy the marginal distributions of the properties from the training split.
Therefore, we choose to find a split property that has high dependence with all other split properties.
Concretely, in both cases we calculated the spearman correlation between all the features against each other and computed the average absolute correlation that one feature has with the rest, then choose the feature that has the highest average correlation to be the split feature. As we have mentioned in the discussion, this is still an area of exploration, but the motivation behind this approach was that choosing a split feature that is somewhat correlated with the others will cause the distribution of the other features to change accordingly which is the desirable effect in our case.

\subsection{ECDF Plots for Model Comparison}
In this section we present a list of ECDF plots for some splits that we inspected for further analysis.

In the main paper we mentioned two conjectures and that we arrived to them by inspecting some ECDF plots, we point the reader to those ECDF plots below.
First, the conjecture that GGAN generally matches the modes of the distribution while DiGress and GDSS tend to be more smooth, this behaviour is emphasized in \autoref{fig:comm20_models_vv_3_1}, \autoref{fig:comm20_models_vv_3_2}, \autoref{fig:comm20_models_vv_4_2}, \autoref{fig:comm20_models_vv_5_2}.

Second, the conjecture that GDSS and DiGress are not able to concentrate the generation near the middle of the distribution for test property average shortest path length, which can be seen in \autoref{fig:comm20_models_vv_1_3}, and \autoref{fig:comm20_models_vv_2_3}.
\begin{figure}[H]
  \centering
  \includegraphics[width=\linewidth]{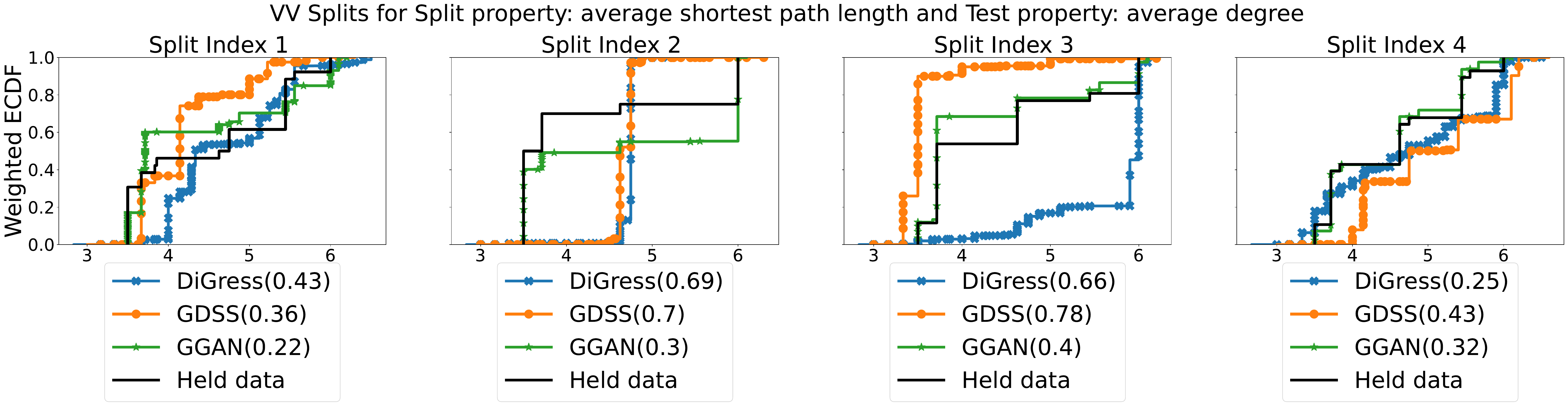}
  \caption{Weighted ECDF for DiGress, GDSS and GGAN as compared to held out data for split property average average shortest path length ($\ell = 3$) when test property is average degree ($\ell' = 1$). The legend reads modelName($\phi_{ks})$ value}
  \label{fig:comm20_models_vv_3_1}
\end{figure}

\begin{figure}[H]
  \centering
  \includegraphics[width=\linewidth]{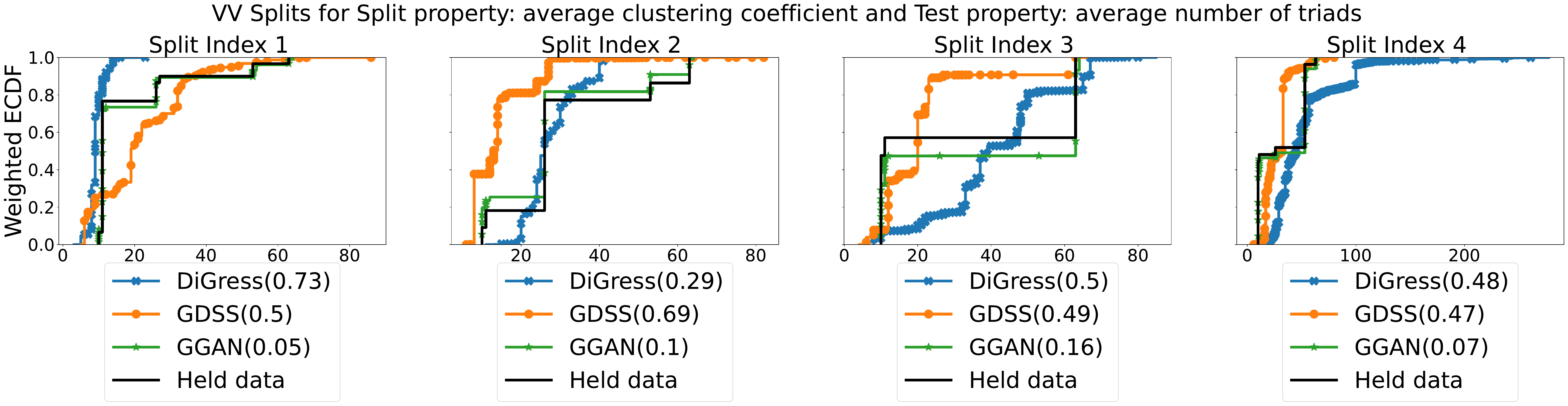}
  \caption{Weighted ECDF for DiGress, GDSS and GGAN as compared to held out data for split property average clustering coefficient ($\ell = 4$) when test property is average number of triads ($\ell' = 2$). The legend reads modelName($\phi_{ks})$ value}
  \label{fig:comm20_models_vv_4_2}
\end{figure}

\begin{figure}[H]
  \centering
  \includegraphics[width=\linewidth]{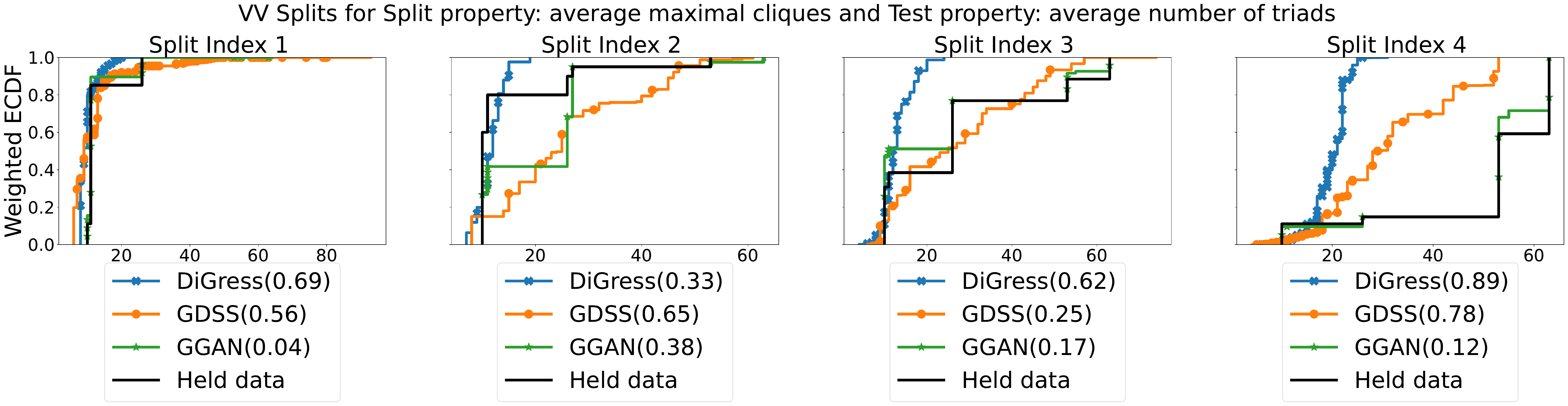}
  \caption{Weighted ECDF for DiGress, GDSS and GGAN as compared to held out data for split property average maximal cliques ($\ell = 5$) when test property is average number of triads ($\ell' = 2$). The legend reads modelName($\phi_{ks})$ value}
  \label{fig:comm20_models_vv_5_2}
\end{figure}

\begin{figure}[H]
  \centering
  \includegraphics[width=\linewidth]{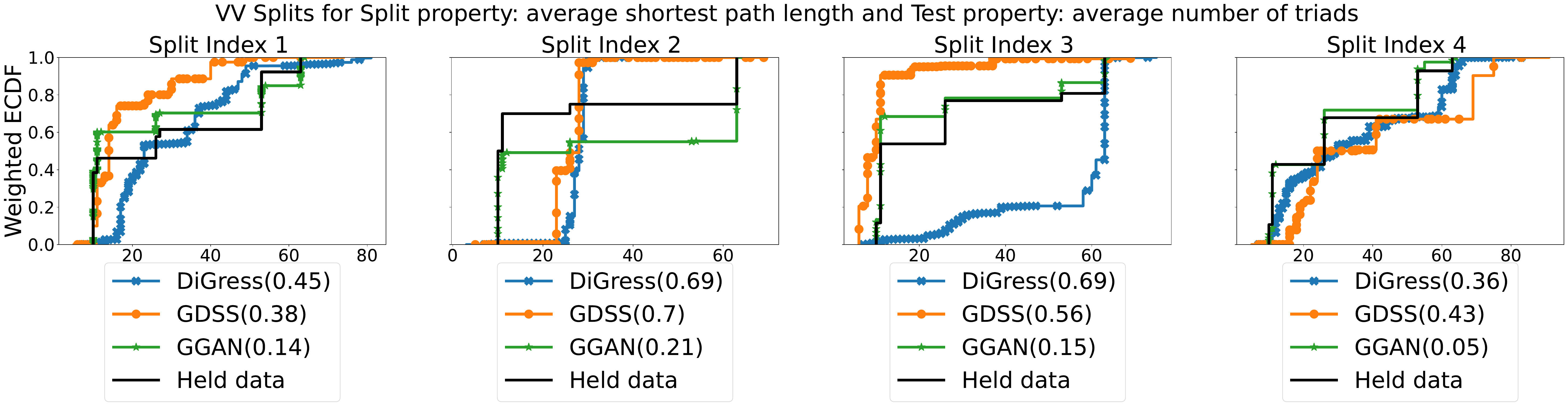}
  \caption{Weighted ECDF for DiGress, GDSS and GGAN as compared to held out data for split property average shortest path length ($\ell = 3$) when test property is average number of triads ($\ell' = 2$). The legend reads modelName($\phi_{ks})$ value}
  \label{fig:comm20_models_vv_3_2}
\end{figure} 

\begin{figure}[H]
  \centering
  \includegraphics[width=\linewidth]{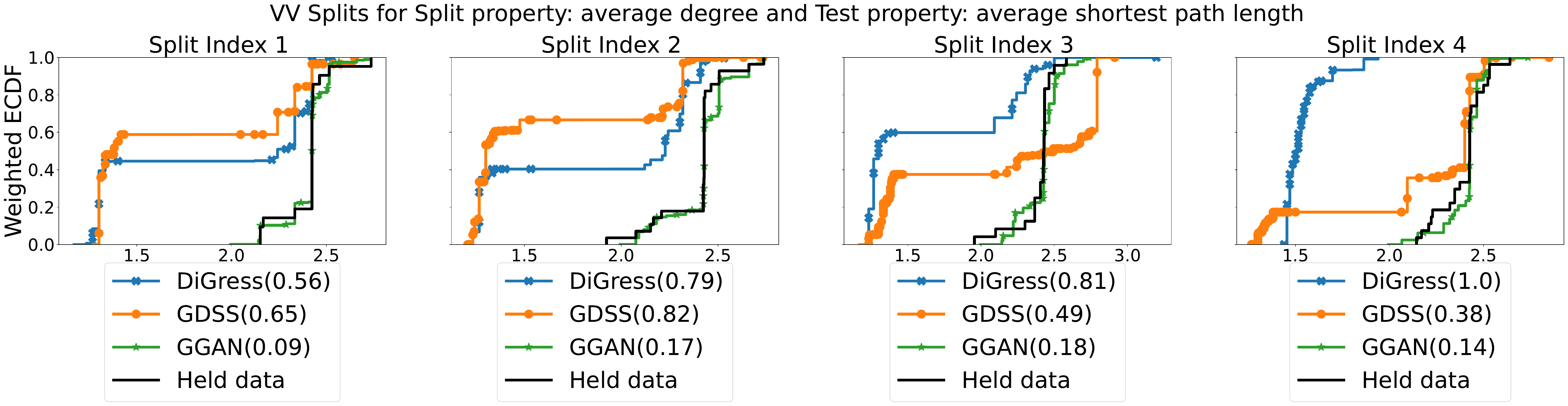}
  \caption{Weighted ECDF for DiGress, GDSS and GGAN as compared to held out data for split property average degree ($\ell = 1$) when test property is average shortest path length ($\ell' = 3$). The legend reads modelName($\phi_{ks})$ value}
  \label{fig:comm20_models_vv_1_3}
\end{figure} 

\begin{figure}[H]
  \centering
  \includegraphics[width=\linewidth]{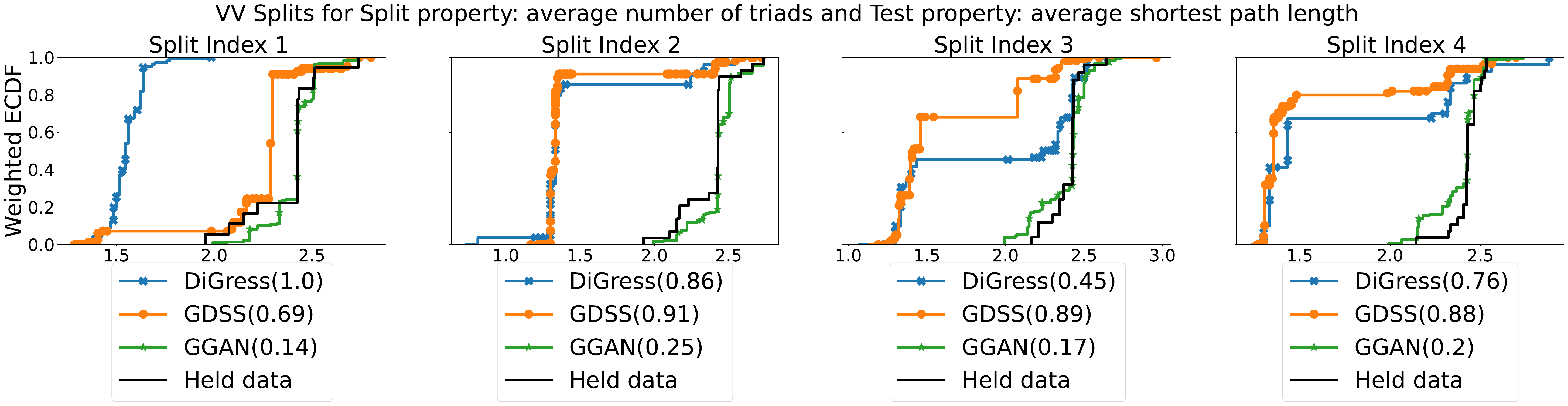}
  \caption{Weighted ECDF for DiGress, GDSS and GGAN as compared to held out data for split property average number of triads ($\ell = 2$) when test property is average shortest path length ($\ell' = 3$). The legend reads modelName($\phi_{ks})$ value}
  \label{fig:comm20_models_vv_2_3}
\end{figure}

\subsection{Examples of Generated Molecules}
\label{sec:visuals}
From the top 100 molecules generated from DiGress and GDSS we filter for validity and novelty, and we visualize the top 4 weighted molecules in different scenarios.

\begin{figure}[ht]
\begin{subfigure}{.5\textwidth}
  \centering
\includegraphics[width=.7\linewidth]{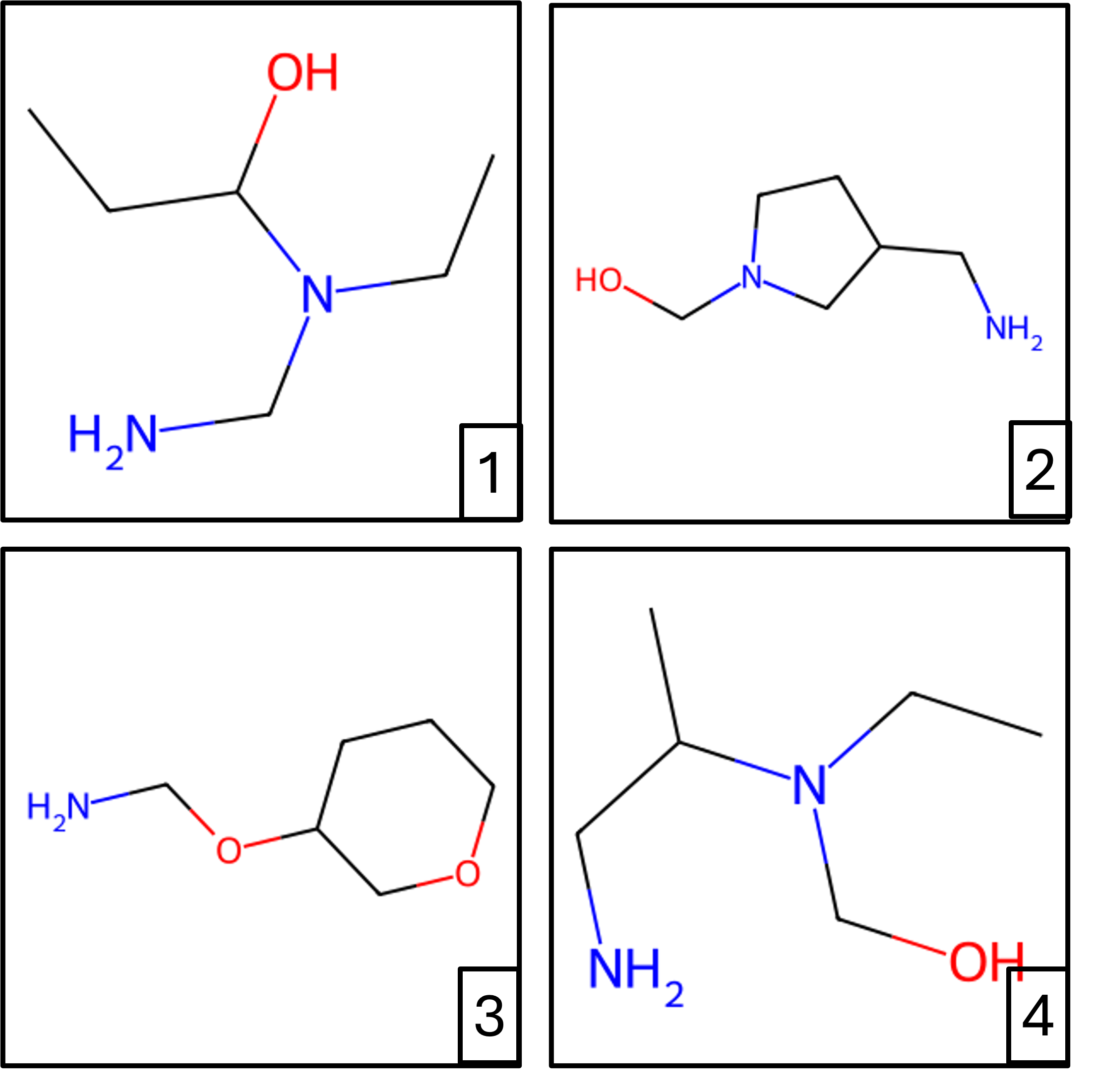}  
  \caption{Top 4 valid and novel molecules sampled from DiGress \\  when evaluated against v-val}
  \label{fig:sub-first}
\end{subfigure}
\begin{subfigure}{.5\textwidth}
  \centering
\includegraphics[width=.7\linewidth]{digress_test.png}  
  \caption{Top 4 valid and novel molecules sampled from DiGress \\  when evaluated against v-test}
  \label{fig:sub-second}
\end{subfigure}
\hspace{0.2em}
\newline
\begin{subfigure}{.5\textwidth}
  \centering
\includegraphics[width=.7\linewidth]{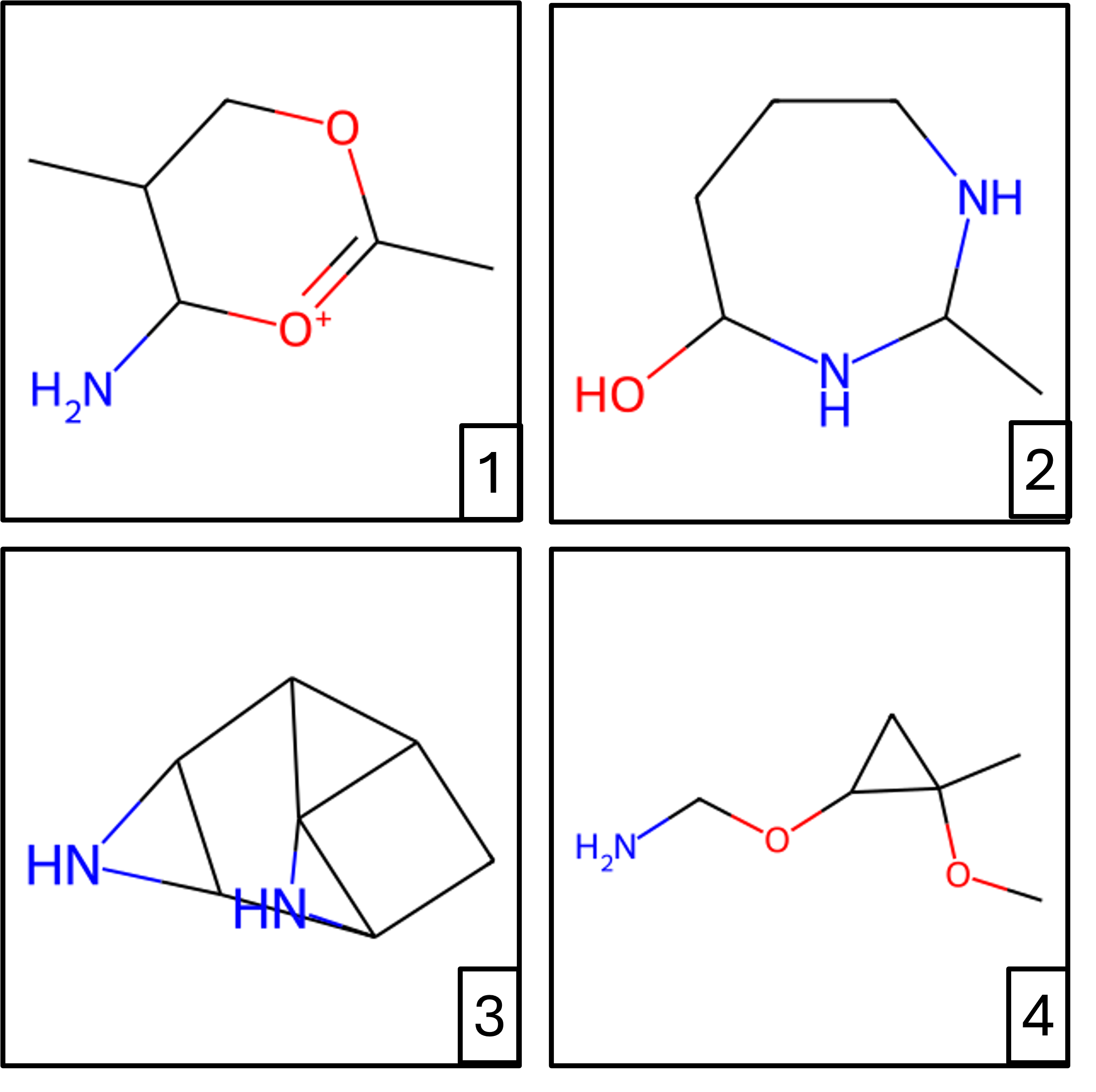}  
 
  \caption{Top 4 valid and novel molecules sampled from GDSS \\  when evaluated against v-val}
  \label{fig:sub-third}
\end{subfigure}
\begin{subfigure}{.5\textwidth}
  \centering
\includegraphics[width=.7\linewidth]{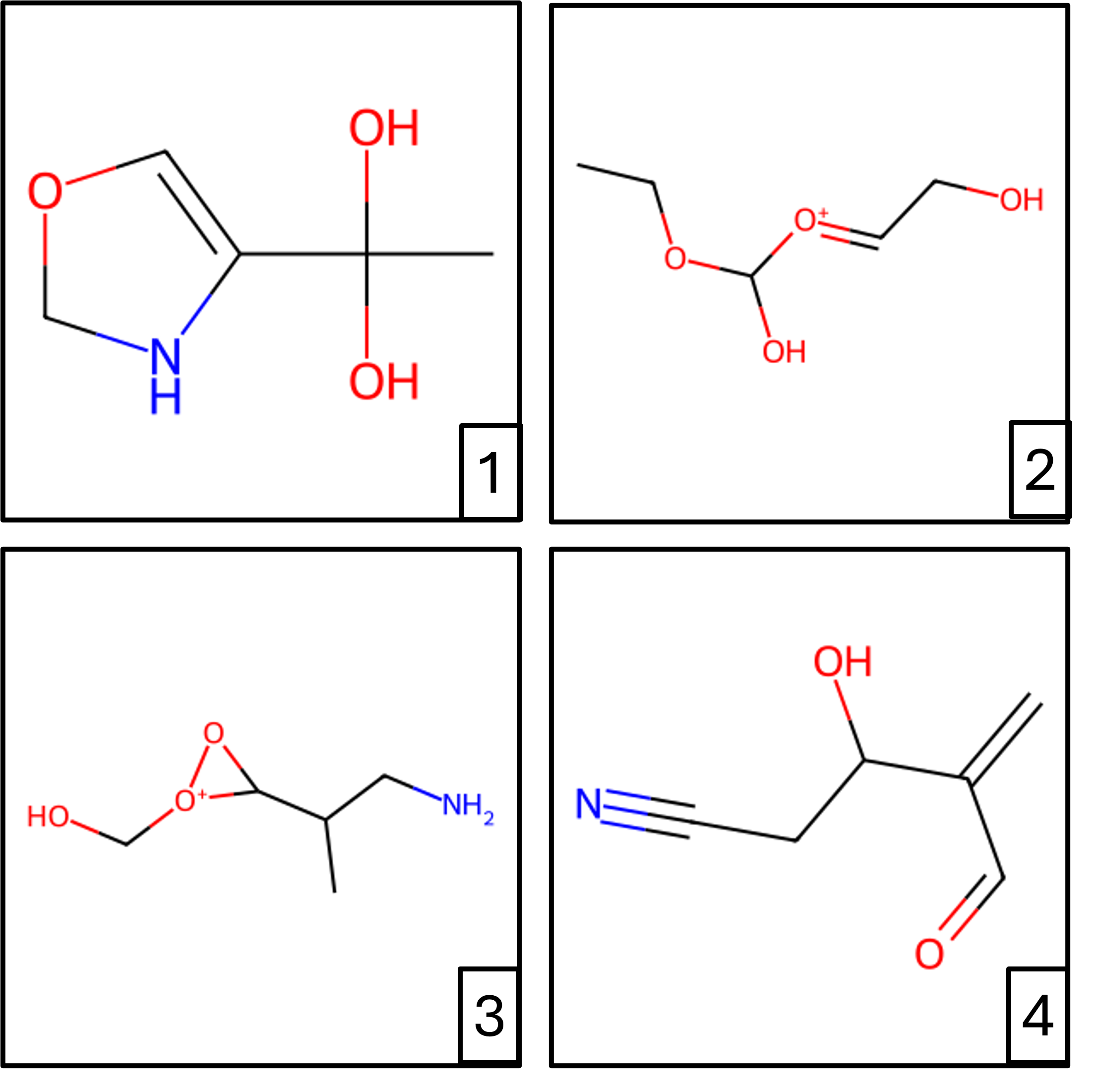}  
  \caption{Top 4 valid and novel molecules sampled from GDSS \\  when evaluated against v-test}
  \label{fig:sub-fourth}
\end{subfigure}
\caption{Visualization of the top 4 highest weighted molecules (after filtering for novelty and validity) that are sampled from DiGress and GDSS and assigned weights by our approach when testing against the held portion being either v-val or v-test. These are the same molecules in \autoref{fig:dist_val_test} that were plotted with respect to the TPSA distribution.}
\label{fig:example_molecules}
\end{figure}

\end{document}